\newcommand{\smalltitle}[1]{ \vspace{1mm}{\noindent\textbf{#1.}\hspace{1mm}}}
\newtheorem{theorem}{Theorem}
\newtheorem{definition}{Definition}
\newtheorem{example}{Example}
\newcommand{\BibTeX}{B\kern-.05em{\sc i\kern-.025em b}\kern-.08em\TeX}
\begin{document}

%%%%%%%%%%%%%%%%%%%%%%%%%%%%%%%%%%%%%%%%%%%%%%%%%%%%%%%%%%%%%%%%%%%%%%%%

\begin{frontmatter}

%%% Use this command to specify your submission number.
%%% In doubleblind mode, it will be printed on the first page.

\paperid{123} 

%%% Use this command to specify the title of your paper.

\title{Lossless Token Merging Even Without Fine-Tuning in Vision Transformers}

%%% Use this combinations of commands to specify all authors of your 
%%% paper. Use \fnms{} and \snm{} to indicate everyone's first names 
%%% and surname. This will help the publisher with indexing the 
%%% proceedings. Please use a reasonable approximation in case your 
%%% name does not neatly split into "first names" and "surname".
%%% Specifying your ORCID digital identifier is optional. 
%%% Use the \thanks{} command to indicate one or more corresponding 
%%% authors and their email address(es). If so desired, you can specify
%%% author contributions using the \footnote{} command.

% \author[A]{\fnms{First}~\snm{Author}\orcid{....-....-....-....}\thanks{Corresponding Author. Email: somename@university.edu.}\footnote{Equal contribution.}}
% \author[B]{\fnms{Second}~\snm{Author}\orcid{....-....-....-....}\footnotemark}
% \author[B,C]{\fnms{Third}~\snm{Author}\orcid{....-....-....-....}} 

% \address[A]{Short Affiliation of First Author}
% \address[B]{Short Affiliation of Second Author and Third Author}
% \address[C]{Short Alternate Affiliation of Third Author} 

\author[A]{\fnms{Jaeyeon}~\snm{Lee}}
\author[A]{\fnms{Dong-Wan}~\snm{Choi}\thanks{Corresponding Author. Email: dchoi@inha.ac.kr}}
\address[A]{Department of Computer Science and Engineering, Inha University, South Korea}

%%% Use this environment to include an abstract of your paper.

\begin{abstract}
Although Vision Transformers (ViTs) have become the standard architecture in computer vision, their massive sizes lead to significant computational overhead. Token compression techniques have attracted considerable attention to address this issue, but they often suffer from severe information loss, requiring extensive additional training to achieve practical performance. In this paper, we propose \textbf{A}daptive \textbf{T}oken \textbf{M}erging (ATM), a novel method that ensures lossless token merging, eliminating the need for fine-tuning while maintaining competitive performance. ATM adaptively reduces tokens across layers and batches by carefully adjusting layer-specific similarity thresholds, thereby preventing the undesirable merging of less similar tokens with respect to each layer. Furthermore, ATM introduces a novel token matching technique that considers not only similarity but also merging sizes, particularly for the final layers, to minimize the information loss incurred from each merging operation. We empirically validate our method across a wide range of pretrained models, demonstrating that ATM not only outperforms all existing training-free methods but also surpasses most training-intensive approaches, even without additional training. Remarkably, training-free ATM achieves over a 30\% reduction in FLOPs for the DeiT-T and DeiT-S models without any drop in their original accuracy.
\end{abstract}

% ,making it particularly well-suited for applications where maintaining high accuracy is non-negotiable, yet low computational cost is essential.

% , thereby minimizing information loss and eliminating the need for additional training. Furthermore, ATM adaptively performs inference for batches and applies a novel token matching technique in the later layers to maintain the integrity of merged tokens. \jlee{revised}

% Unlike recent approaches that consider multiple factors, ATM focuses solely on token similarities, offering advantages in inference speed and training efficiency. [기존 문장]

% while fully maintaining accuracy without any additional training, making it highly effective for applications where preserving accuracy is crucial as well as minimizing computational costs.

\end{frontmatter}

%%%%%%%%%%%%%%%%%%%%%%%%%%%%%%%%%%%%%%%%%%%%%%%%%%%%%%%%%%%%%%%%%%%%%%%%

\section{Introduction}
Vision Transformers (ViTs) have reported state-of-the-art performance across a wide range of visual recognition tasks \cite{LiuL00W0LG21,DosovitskiyB0WZ21}. This superior performance, however, essentially comes with their massive sizes, which suffer from not only prolonged training time but also high computational costs. Although various model compression techniques \cite{0004HWCCC22,ZhenglZYTXRP22} have attempted to reduce the size of ViT itself, they commonly make permanent modifications to the model architecture, potentially losing well-pretrained prior knowledge as well as often requiring additional training steps for the performance recovery \cite{abs-2110-04869}. 

\begin{figure}[t]
    \centering
    \includegraphics[width=0.9\columnwidth]{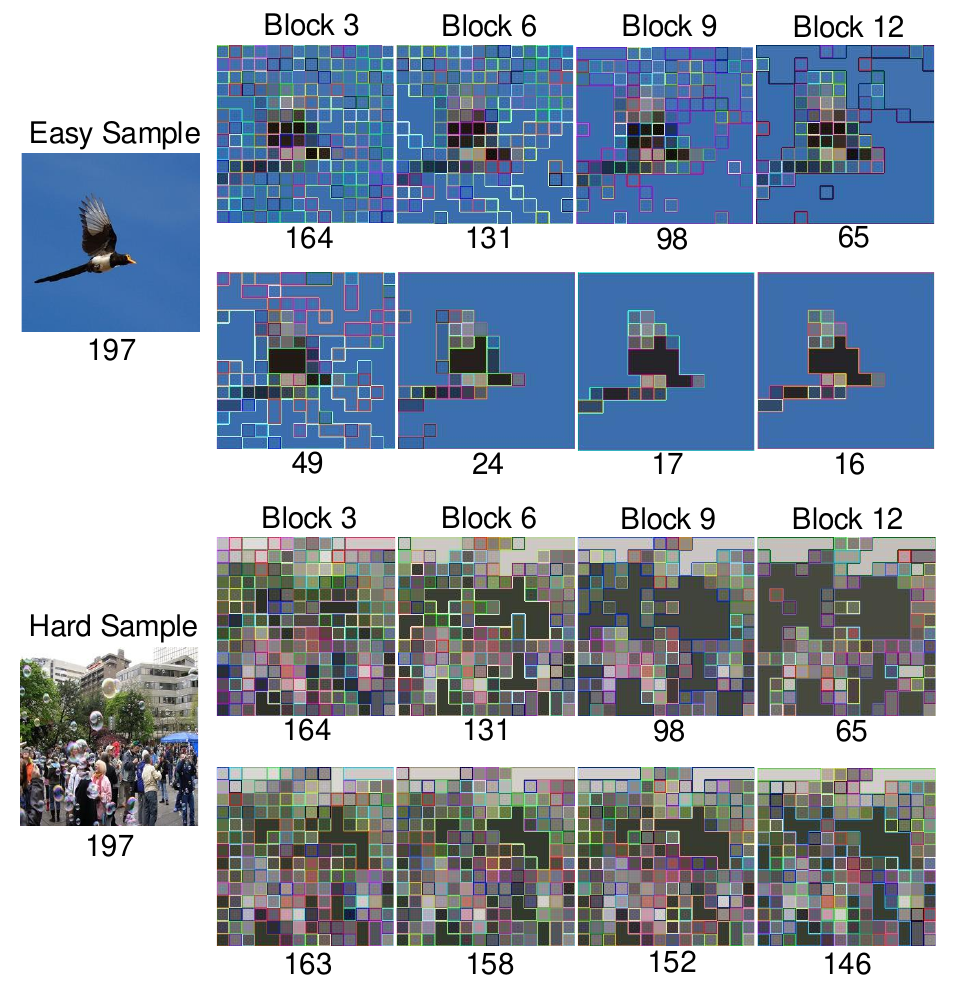}
    \caption{\textbf{Comparison of static and adaptive token merging methods.} For each sample image, the upper row presents the visualization of static top-$r$ merging \protect\cite{BolyaFDZFH23}, while the lower row shows the results of threshold-based adaptive merging. The number below each image indicates the count of remaining tokens after the corresponding merging process.}
    % \vspace{17pt}
    \label{fig1}
\end{figure}

This has brought a recent alternative, namely \textit{token compression}, which aims to dynamically reduce the number of input tokens along the inference process. Without requiring to change the model architecture, token compression has been reported to effectively improve the computational efficiency of ViTs. The initial approach is \textit{token pruning} \cite{RaoZLLZH21,PanPJWFO21}, aiming to identify and prune unimportant tokens with respect to some heuristic criterion, such as attention scores of the classification (CLS) tokens \cite{LiangGTS0X22,XuZZSLDZXS22}. To further mitigate information loss of pruning, \textit{token merging} \cite{BolyaFDZFH23,Leeck24} is being studied as a promising solution, which focuses on merging similar token pairs, rather than just completely discarding less important tokens.

Unfortunately, most existing token merging methods do not fully enjoy the dynamic nature of token compression, as they define a fixed number of pairs to be merged for each layer and for each instance, such as $r$ most similar pairs, a.k.a. \textit{top-$r$ token merging} \cite{BolyaFDZFH23,KimGHSJ24}. However, this static approach is far from optimal, as the degree of token similarity can vary across layers as well as input samples. As visualized in Figure \ref{fig1}, top-$r$ token merging reduces the same number of tokens per layer per instance, which results in under-merging for easy samples yet over-merging for hard samples. This leads to either inefficient or excessively aggressive token reduction, often requiring expensive fine-tuning or even training from scratch (e.g., 300 epochs in \cite{LongZP0023}) to recover the original performance. This expensive post-training negates the practical motivation for using token compression in the sense that model compression methods also show impressive performance, given the support of extensive fine-tuning.

% Nevertheless, there are also several drawbacks that have been largely overlooked by existing methods in these token merging scenarios.
% First, existing methods often merge a predefined number of tokens with the highest similarity, such as the top-\textit{r} tokens(~). However, this approach may not always be optimal, as the merging process should aim to fuse \textit{similar} token pairs rather than simply the most similar ones, which can lead to merge tokens that may not be semantically related.
% As shown in Figure \ref{fig1}, top-\textit{r} token merging approaches eliminate the same number of tokens per layer for each instance, which may result in under-merging in easy samples and over-merging in hard samples, leading to inefficient token reduction.
% Second, most approaches tend to remove the same number of tokens from all layers, without considering the depth of the transformer.
% Due to these limitations, recent token compression approaches primarily necessitate fine-tuning or extensive training from scratch, e.g., 300 epochs, to recover the performance degraded by the token reduction process, despite the objective of minimizing the loss through the merging process.
% However, the fine-tuning stage is sometimes infeasible due to some dataset constraints, and it can also render token reduction processes less effective, considering that weight pruning or distillation could be better with extensive fine-tuning(~).

In this paper, we address these limitations by proposing \textit{training-free} and \textit{adaptive} token compression, called \textbf{A}daptive \textbf{T}oken \textbf{M}erging (\textbf{ATM}), which completely eliminates the need for fine-tuning in token merging while ensuring high performance. To this end, ATM proposes \textit{threshold-based merging} that can adaptively vary the number of token pairs to merge across instances and layers, as shown in Figure \ref{fig1}, rather than merging a fixed number of pairs with the highest similarity. To properly set thresholds that minimize information loss, we introduce \textit{layer-dependent thresholding}, based on our analysis of the impact of token merging across different layers. Furthermore, aiming to reduce the information loss incurred from each merging operation, ATM introduces \textit{size-distinctive token matching} particularly for the final layers, which takes into account the accumulated number of previously merged tokens (i.e., \textit{merging size}) when grouping and matching tokens to merge. Lastly, to support practical batched inference, ATM incorporates \textit{batch-adaptive merging}, which enables adaptive token compression in batch-processing scenarios.

Through extensive experiments, our ATM method demonstrates superior performance across various pretrained models without the help of any extra training.
Notably, ATM manages to reduce FLOPs by over 30\% in both DeiT-T and DeiT-S without any accuracy drop from their original performance.
Somewhat surprisingly, this training-free performance already surpasses those of existing methods that take advantage of additional training.
Even in the performance comparison with fine-tuning, our method is not only able to further improve the performance but also it reveals high training efficiency, thanks to the reduced information loss in our merging scheme.

% Even in the performance comparison with fine-tuning, our method is not only able to further improve the performance but also it reveals its efficiency in terms of both inference and training speed, thanks to our fast reduction scheme. [기존 문장]
\section{Related Works}

\smalltitle{Efficient Vision Transformers}
A wide range of research has focused on enhancing the efficiency of ViTs while maintaining their performance, either by designing efficient ViT architectures \cite{WangX0FSLL0021,HeoYHCCO21} or by applying compression techniques, such as weight pruning \cite{ZhenglZYTXRP22} and knowledge distillation \cite{WuZPLXFY22}. Without altering model architectures, token compression has emerged as another effective approach that accelerates inference speed by reducing the number of input image tokens. 
Unlike the former approaches, token compression can often be applied without additional training, practically more appealing to various scenarios of improving ViT efficiency.

% providing a more practical solution for improving ViT efficiency across various scenarios.

\smalltitle{Token Compression}
Given that the computational complexity of ViTs scales quadratically with the number of input tokens, numerous studies have explored token compression strategies. 
The vast majority of these have focused on two approaches, namely \textit{token pruning} and \textit{token merging}.
The former primarily utilizes CLS token's attention scores \cite{XuZZSLDZXS22,LiangGTS0X22} or additional learnable parameters \cite{RaoZLLZH21,MengLCLWJL22} to distinguish unimportant tokens, while the latter considers diversity \cite{MarinCRPRT23,LongZP0023} or similarity between token pairs \cite{BolyaFDZFH23} to identify the tokens that could be summarized into one representative. Furthermore, recent works combine pruning and merging \cite{KimGHSJ24}, or select tokens based on multiple criteria such as similarity, informativeness, and merging size \cite{Leeck24}.
However, most of these methods suffer from significant performance degradation without the help of additional post-training. One underlying reason lies in the lack of adaptive efficiency in their reduction scheme that consistently removes the same number of tokens for each layer.

% Moreover, in token compression scenarios, where tokens are removed during inference time, complex token reduction structures can slow down the process.
% Consequently, it still remains for token compression to achieve a good balance between inference efficiency and model accuracy.

% the token compression necessitate simple yet effective methods to balance inference efficiency and model performance.

\smalltitle{Training-Free Token Compression} 
A few approaches in token compression have attempted to eliminate the need of additional fine-tuning, by solely utilizing the model architecture and tokens themselves. ATS \cite{FayyazKJSJSPG22} prunes tokens using inverse transform sampling after obtaining importance scores based on the CLS token's attention scores and the value vector norms.
ToMe \cite{BolyaFDZFH23} simply merges the most similar top-\textit{r} token pairs for each layer, while MCTF \cite{Leeck24} takes into account multiple criteria to select token pairs to be merged.
Zero-TP \cite{WangDj24} uses the attention matrix to prune the tokens using importance-based and similarity-based metrics.
However, none of the existing training-free approaches get to achieve the same level of performance as training-required counterparts, due to severe information loss incurred from their token reduction strategies.

% suffer from significant information loss when reducing tokens, which stems from their unsafe token reduction strategy across all image samples and layers. \jlee{revised}

% However, recent studies commonly incorporate multiple factors into the reduction process to minimize the need for fine-tuning, which conversely makes the whole inference process complicated and time-consuming. In contrast, our proposed method leverages a much simpler technique for achieving high inference speed, yet without the severe drop in model performance. [기존 문장]

% token pruning: 
% DynamicViT(NIPS 2021, 30epoch): additional module
% IA-RED2(NIPS 2021, 90epoch): policy token
% Evo-ViT(AAAI 2022, 300epoch): cls attn
% EViT(ICLR 2022, 300epoch): cls attn
% A-ViT(CVPR 2022, 100epoch): halting probability  6월
% SPViT(ECCV 2022, 60epoch): Token Selector
% Adavit(CVPR 2022): decision network
% Zero-TPrune(CVPR 2024, 0epoch)

% token merging: ToMe(ICLR 2023, 300epoch), BAT(300epoch): merging+clustering, MCTF(CVPR 2024, 30epoch), 
% pruning + merging: DiffRate(33epoch), ToFu(0epoch)
\section{Adaptive Token Merging} 
In this section, we first provide an overview of token compression in the ViT structure, and then introduce our proposed ATM method in detail.

\subsection{Preliminaries}
\smalltitle{Token Reduction in ViT}
In the ViT architecture \cite{0007CWYSJTFY21,JiangHYZSJWF21}, multiple blocks are stacked sequentially and the input tokens $\mathbf{X}^l \in \mathbb{R}^{N \times D}$ are processed through these blocks, where $l \in [1, L]$ is the block index, $L$ is the total number of blocks, $N$ is the number of tokens, and $D$ is the dimension of each token.
Within each block, the input $\mathbf{X}^l$ is processed as follows:
\begin{align}
    \mathbf{X}_{\text{Attn}}^l &= \mathbf{X}^l + \text{Attn}(\text{LN}(\mathbf{X}^l)), \nonumber \\
    \mathbf{X}^{l+1} &= \mathbf{X}_{\text{Attn}}^l + \text{MLP}(\text{LN}(\mathbf{X}_{\text{Attn}}^l)), \nonumber
\end{align}
where $\text{Attn}$, $\text{MLP}$, and $\text{LN}$ represent multi-head self-attention, multi-layer perceptron, and layer normalization, respectively.
Since the computational cost increases quadratically with respect to $N$ in $\text{Attn}$ and linearly in $\text{MLP}$, the goal of token compression is minimizing $N$ as much as possible to mitigate this $N$-dependent cost growth.

% without altering the architecture.

\smalltitle{Baseline Token Merging}
In the existing token merging methods \cite{BolyaFDZFH23,KimGHSJ24,Leeck24}, a common approach is to (i) divide the tokens for each layer into two equal-sized groups by alternating token assignment and (ii) perform a bipartite matching procedure for merging similar token pairs within the layer. Although this bipartite matching scheme has reported to incur relatively low information loss, compared to token pruning, all existing methods take a \textit{static approach} that merges a predefined number of token pairs across layers and images, leading to either over-merging or under-merging as observed in Figure \ref{fig1}.

% Toward a training-free scenario, this work aims to improve token merging due to its nice property of incurring relatively  information loss, compared to token pruning.

% Our method also adopts this bipartite matching scheme, but all the existing works take a non-adaptive approach that merges a static number of token pairs across different layers and diverse input images. However, as observed in Figure \ref{fig1}, this static token reduction can lead to either over-merging or under-merging in different images and layers.

% in a training-free scenario

% While this method yields impressive performance, they still lack adaptability across different layers and diverse image contents.
% As illustrated in Figure \ref{fig1}, merging a predefined number of tokens can result in inappropriate token reduction.
% Moreover, Figure \ref{fig2} demonstrates that merging tokens in the initial layer may lead to an exponential increase in error.

\subsection{Threshold-Based Adaptive Token Merging}
Aiming to address limitations of baseline token merging, we revisit existing static merging scheme, thereby proposing an adaptive token merging method, called ATM, which takes into account both image-wise and layer-wise adaptation. Our basic strategy is \textit{threshold-based merging}, in which we merge token pairs that are truly similar enough based on the threshold for each layer, instead of a fixed number of pairs with the highest similarity. As presented in Figure \ref{fig1}, this threshold-based merging effectively controls the degree of token reduction depending on the complexity of the image features. An important issue here is how to properly set the threshold to minimize the loss of information. An easy yet extreme solution could be globally applying a single threshold to all layers, but this is not likely to achieve satisfactory performance, as analyzed in our empirical study as follows.

\begin{figure}[t]
    \centering
    \includegraphics[width=0.9\columnwidth]{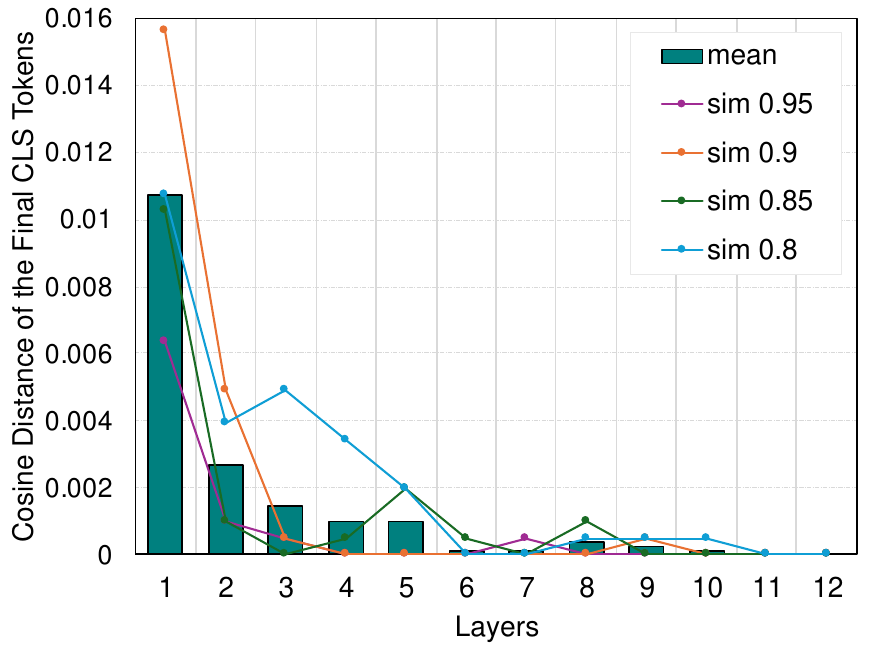} % Reduce the figure size so that it is slightly narrower than the column. Don't use precise values for figure width.This setup will avoid overfull boxes.
    \caption{\textbf{Layer-wise impact of token merging on information loss.} Each line illustrates the amount of information change caused by token merging solely within each layer, measured as the cosine distance of final CLS tokens obtained after merging with the specified similarity threshold (i.e., $0.8$--$0.95$) from their original CLS tokens. The bar represents the average over the four lines.}
    % \vspace{20pt}
    % , each of which is measured using three different token pairs
    \label{fig2}
\end{figure}

% turns out to enable more flexible token reduction depending on different input features. 

\begin{figure*}[t!]
    \centering
    \includegraphics[width=0.99\textwidth]{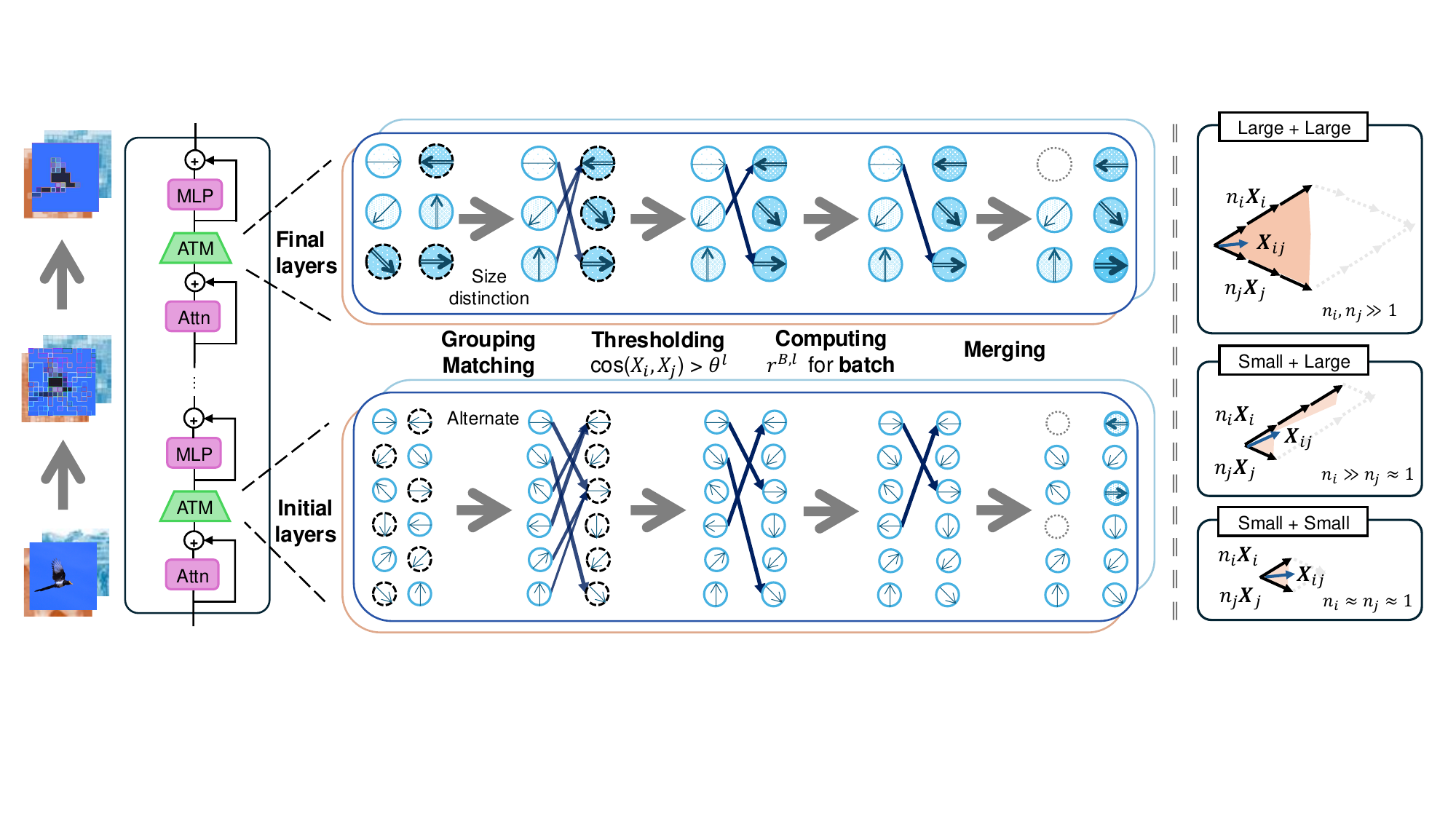} % Reduce the figure size so that it is slightly narrower than the column. Don't use precise values for figure width.This setup will avoid overfull boxes.
    \caption{\textbf{Pipeline of Adaptive Token Merging (ATM).} For each layer, we select a matching strategy, set a layer-dependent threshold, and 
    find the adaptive number $r^{B,l}$ of tokens to reduce for each batch. The token arrows depict their corresponding vector values, and the arrow width and color intensity together reflect the extent of corresponding merging sizes.}
    % \vspace{20pt}
    \label{fig3}
\end{figure*}

\smalltitle{Layer-Wise Impact on Information Loss}
In order to examine the potential information loss caused by merging at each individual layer, we first consider the fact that only the final CLS token is used for actual prediction in DeiT-S. Therefore, we regard the original value of this CLS token as a desired outcome to be generated by compressed versions, and measure the cosine distance between the original CLS token and its changed counterpart after layer-specific merging to quantify the extent of information loss incurred by merging.

% we have conducted an empirical study using the pretrained DeiT-S model and the corresponding result is summarized in Figure \ref{fig2}.
% Considering the fact that the final CLS token is solely used for prediction in DeiT-S, we regard the original value of this CLS token as a desired outcome to be generated by compressed models.
% Thus, after token merging at a specific layer, the resulting changes of the final CLS token can be interpreted as the information loss caused by the layer-specific merging. To quantify this information loss, we measure the cosine distance between the original CLS token and its changed counterpart after merging.

Figure \ref{fig2} plots these information losses in the final CLS token when performing layer-specific merging with various similarity thresholds (i.e., 0.8--0.95), where each line corresponds to a particular threshold and the bar represents the average over the lines. It is roughly observed that the information loss increases exponentially as the layer becomes farther away from the final output. This is probably due to the fact that the information loss of merging accumulates more significantly in initial layers than in final ones. Therefore, in the initial layers, token reduction with a threshold can lead to a substantially larger amount of information loss than using the same threshold in the final layers. This suggests that applying a constant threshold across all layers is suboptimal, as the information loss from token merging differs between layers.

% the layer-wise impact on information loss varies due to the accumulation of information loss.

% the corresponding 

% As shown in Figure \ref{fig2}, varying similarity thresholds (i.e., 0.8--0.95), we represent 

% we perform merging with various thresholds (i.e., 0.8--0.95) and represent  in the final CLS token with dots connected by lines.
% Each line corresponds to a different threshold value, and the bar represents the average over these lines.
% The bar in Figure \ref{fig2} illustrates that 

% the information loss in the final CLS token increases exponentially as the layers become deeper, implying that the merging loss accumulates more significantly in deeper layers than in shallower ones. In initial layers, token reduction with a threshold can lead to an exponentially larger amount of information change than using the same threshold in final layers. This suggests that we should not set a constant threshold for all the layers, as the layer-wise impact varies due to the accumulation of merging loss.

% In other words, even if we apply a constant threshold to all the layers, its impact varies significantly depending on the layer depth 

% Based on this analysis, we design the threshold to decrease exponentially as the inference proceeds through the layers.

% Our analysis in Figure \ref{fig2} reveals the fact that when merging tokens based on a threshold, the information loss increases exponentially, while going through the layers along the forward inference path. Thus,  

\smalltitle{Layer-Dependent Thresholding}
Based on the analysis above, we propose a \textit{layer-dependent thresholding} scheme for properly adjusting the similarity threshold across different layers. Basically, we start with higher thresholds in initial layers and gradually decrease the threshold while approaching final layers. Specifically, inspired by the exponential curve in Figure \ref{fig2}, we design the following decreasing function of the threshold $\theta^l$ at a given layer $l$:
\begin{equation}
    \theta^l = \max\{\alpha - (e^{\beta \cdot (l - 1)} - 1), \theta_{\text{min}}\},
    \label{eq3}
\end{equation}
where $\alpha$ is the initial threshold value, $\beta$ controls the rate of threshold decay, and $\theta_{\text{min}}$ sets the lower bound of the threshold. The lower bound $\theta_{\text{min}}$ intends to prevent the threshold from becoming excessively low in final layers, thereby avoiding erroneous merging of dissimilar tokens in such layers. By adjusting three key parameters, $\alpha$, $\beta$, and $\theta_{\text{min}}$, we can finely control the degree of token merging across different layers, avoiding significant information loss by locally optimizing reduction processes. In our experimental results, this decreasing function turns out to be highly effective in adaptive reduction across layers than constant thresholding.

\subsection{Size-Distinctive Token Matching} \label{sec:size}

Even though the information loss can largely be reduced through layer-dependent thresholding, it is somewhat inevitable to merge less similar tokens in final layers than in initial layers, as the similarity threshold gradually decreases according to our thresholding strategy. Furthermore, remaining tokens in the final layers would have already been merged with multiple tokens along the preceding layers, and therefore each merging operation can result in different amount of information loss even with the same similarity and same layer, depending on the accumulated number of tokens that have been merged, referred to as \textit{merging size}. The merging size of a token is formally defined as follows:
\begin{definition}[Merging Size] \label{def:size}
Given a token $\mathbf{X}_i$, the \textbf{merging size} $n_i$ of $\mathbf{X}_i$ is defined as:
\begin{itemize}
    \item $n_i = 1$ if $\mathbf{X}_i$ is an original token,
    \item $n_i = \sum_j {n_j}$ for all $\mathbf{X}_j$'s with merging size $n_j$ that have been merged to constitute $\mathbf{X}_i$, otherwise.
\end{itemize}
\end{definition}

\smalltitle{Merging Error}
In order to examine how much loss of information can be caused from each merging operation, we define \textit{merging error}, which indicates the amount of information loss incurred from merging two token vectors as follows:
\begin{definition}[Merging Error] \label{def:merging}
   Given two normalized token vectors $\mathbf{X}_i$ and $\mathbf{X}_j$ with merging sizes $n_{i}$ and $n_{j}$, respectively, their merged token $\mathbf{X}_{ij}$ is the weighted average\footnote{The standard approach in token merging \cite{BolyaFDZFH23}} of $\mathbf{X}_i$ and $\mathbf{X}_j$, i.e., $\mathbf{X}_{ij} = \frac{n_{i}\mathbf{X}_i + n_{j}\mathbf{X}_j}{n_{i} + n_{j}}.$
   Then, the \textbf{merging error} $\mathcal{E}_{m}$ is defined as:
   $$\mathcal{E}_{m}(\mathbf{X}_i, \mathbf{X}_j) = n_i ~ \delta(\mathbf{X}_i, \mathbf{X}_{ij}) + n_j ~ \delta(\mathbf{X}_j, \mathbf{X}_{ij}),$$
   where $\delta(\cdot, \cdot) = 1 - \textsc{cos}(\cdot, \cdot)$ represents the cosine distance between two token vectors.
\end{definition}
Note that the merging error does not account for the precise cumulative information loss from all preceding layers. Instead, it focuses on the loss incurred during a single merging operation at a given moment, and hence quantifies total amount of discrepancies between the given tokens and the merged one, treating each token as if it consists of as many copies as its merging size. Based on Definition \ref{def:merging}, we present the following theorem regarding merging error in relation to merging sizes:
\begin{theorem} \label{thm:merging}
$\mathcal{E}_{m}(\mathbf{X}_i, \mathbf{X}_j)$ is $\Theta\left(\frac{n_i n_j} {n_i + n_j}\delta(\mathbf{X}_i, \mathbf{X}_j)\right)$, where $\mathbf{X}_i$ and $\mathbf{X}_j$ are normalized token vectors with merging sizes $n_{i}$ and $n_{j}$, respectively. More specifically, it holds that:
\begin{align}
\frac{n_i n_j}{n_i + n_j} \delta (\mathbf{X}_i , \mathbf{X}_j) \leq \mathcal{E}_{m}(\mathbf{X}_i, \mathbf{X}_j) \leq \frac{2 n_i n_j}{n_i + n_j} \delta (\mathbf{X}_i , \mathbf{X}_j). \label{eq:merging_error}
\end{align}
\end{theorem}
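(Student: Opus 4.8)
The plan is to collapse the whole expression to a single scalar, the cosine similarity $c = \textsc{cos}(\mathbf{X}_i,\mathbf{X}_j)$, and to lean on the fact that both vectors are unit-length. First I would set $\delta = \delta(\mathbf{X}_i,\mathbf{X}_j) = 1 - c$ and $s = n_i + n_j$, then expand $\mathbf{X}_{ij} = (n_i\mathbf{X}_i + n_j\mathbf{X}_j)/s$ to compute the inner products appearing in $\mathcal{E}_m$. Using $\|\mathbf{X}_i\| = \|\mathbf{X}_j\| = 1$ gives $\mathbf{X}_i \cdot \mathbf{X}_{ij} = (n_i + n_j c)/s$, $\mathbf{X}_j\cdot\mathbf{X}_{ij} = (n_i c + n_j)/s$, and
\[
\|\mathbf{X}_{ij}\|^2 = \frac{n_i^2 + n_j^2 + 2 n_i n_j c}{s^2} = 1 - \frac{2 n_i n_j}{s^2}\,\delta,
\]
where the last step uses $n_i^2 + n_j^2 = s^2 - 2 n_i n_j$.

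The satisfying core of the argument is the cancellation in the next step. Substituting $\delta(\mathbf{X}_i,\mathbf{X}_{ij}) = 1 - (\mathbf{X}_i\cdot\mathbf{X}_{ij})/\|\mathbf{X}_{ij}\|$ and its $j$-counterpart into $\mathcal{E}_m = n_i\,\delta(\mathbf{X}_i,\mathbf{X}_{ij}) + n_j\,\delta(\mathbf{X}_j,\mathbf{X}_{ij})$, I expect the numerator $n_i(n_i + n_j c) + n_j(n_i c + n_j)$ to collapse exactly to $n_i^2 + n_j^2 + 2 n_i n_j c = s^2\|\mathbf{X}_{ij}\|^2$, so that everything telescopes into the clean closed form
\[
\mathcal{E}_m(\mathbf{X}_i,\mathbf{X}_j) = s\bigl(1 - \|\mathbf{X}_{ij}\|\bigr).
\]
This identity is the step I would verify most carefully, since the entire theorem rests on it.

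From here I would rationalize. Writing $t = 2 n_i n_j \delta / s^2$ so that $\|\mathbf{X}_{ij}\| = \sqrt{1-t}$, the elementary identity $1 - \sqrt{1-t} = t/(1 + \sqrt{1-t})$ together with $st = 2 n_i n_j \delta / s$ yields
\[
\mathcal{E}_m = \frac{s\,t}{1 + \sqrt{1-t}} = \frac{2 n_i n_j \delta / s}{1 + \sqrt{1-t}}.
\]
Both target bounds then follow by confining the single denominator to $1 + \sqrt{1-t} \in [1,2]$: the upper bound on $\mathcal{E}_m$ comes from $\sqrt{1-t}\le 1$ and the lower bound from $\sqrt{1-t}\ge 0$, giving exactly $\tfrac{n_i n_j}{s}\delta \le \mathcal{E}_m \le \tfrac{2 n_i n_j}{s}\delta$.

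The main obstacle I anticipate is not the algebra but justifying that $t \in [0,1]$, which is precisely what makes $\sqrt{1-t}$ real and the denominator genuinely trapped in $[1,2]$. This amounts to $2 n_i n_j \delta \le s^2$; since $\delta \le 2$ for unit vectors and $2 n_i n_j / s^2 = 2 n_i n_j / (n_i+n_j)^2 \le 1/2$ by AM--GM (as $(n_i+n_j)^2 \ge 4 n_i n_j$), the product is at most $1$. I would note that this is equivalent to $\|\mathbf{X}_{ij}\|^2 \ge 0$, so it holds automatically, excluding only the degenerate antipodal case $t=1$ that merging of similar tokens never triggers. The $\Theta(\cdot)$ claim is then immediate, as the two explicit bounds differ by the constant factor $2$.
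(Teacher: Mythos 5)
Your proof is correct and is essentially the paper's own argument: both derive the closed form $\mathcal{E}_m(\mathbf{X}_i,\mathbf{X}_j) = (n_i+n_j)\bigl(1-\|\mathbf{X}_{ij}\|\bigr)$ via the same cancellation, then rationalize and trap the resulting denominator between one and two times $n_i+n_j$ (the paper writes it as $\sqrt{N^2}+\sqrt{N^2-2M\delta}$ with $N=n_i+n_j$, $M=n_in_j$, which is exactly your $1+\sqrt{1-t}$ scaled by $s$). One cosmetic slip that does not affect correctness: your attributions are swapped --- the upper bound on $\mathcal{E}_m$ follows from $\sqrt{1-t}\ge 0$ (denominator $\ge 1$), and the lower bound from $\sqrt{1-t}\le 1$ (denominator $\le 2$).
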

\begin{proof}
    See Appendix \ref{app:theory}.
\end{proof}
According to Theorem \ref{thm:merging}, even when the similarity between tokens being merged is the same, the merging error can vary significantly depending on their merging sizes. In the initial layers, most token pairs being merged are not only highly similar due to a relatively high threshold but also have small merging sizes, leading to negligible merging error, as shown in the `\textit{small + small}' case in Figure \ref{fig3}. In contrast, in the final layers, if both tokens being merged have large merging sizes, the merging error can be substantially higher compared to cases where one or both of the tokens have small merging sizes. To illustrate, suppose either $n_i \gg n_j$ or $n_i \ll n_j$. Then, the resulting merging error would not be considerably high as $\frac{n_i n_j}{n_i + n_j}$ cannot be too large in such cases (the `\textit{small + large}' case in Figure \ref{fig3}). However, when both $n_i$ and $n_j$ are large (the `\textit{large + large}' case in Figure \ref{fig3}), we have $n_i n_j \gg n_i + n_j$, incurring much higher merging error. Let us consider the following specific examples on how much the merging error could be in the three cases in Figure \ref{fig3}. 

    % Given two token vectors with merging sizes $n_{i}$ and $n_{j}$ respectively, their merging error $\mathcal{E}_{m}$ is proportional to the product $n_i n_j$ of their merging sizes and inversely proportional to the sum $n_i + n_j$ of their merging sizes.
% \frac{n_i n_j} {n_i + n_j} \delta(\mathbf{X}_i, \mathbf{X}_j) \leq \mathcal{E}_{m}(\mathbf{X}_i, \mathbf{X}_j) \leq \frac{2 n_i n_j} {n_i + n_j} \delta(\mathbf{X}_i, \mathbf{X}_j),
% $$

% We still use $\mathbf{X}_i$, $\mathbf{X}_j$, $n_i$, and $n_j$ as before to denote two normalized tokens and their corresponding merging sizes, respectively.

\begin{example}[Small + Small]
    Consider the case when $n_i = n_j = 1$, where both $\mathbf{X}_i$ and $\mathbf{X}_j$ are original single tokens that have never been merged. By Eq. (\ref{eq:merging_error}), its merging error will be $c \times \delta(\mathbf{X}_i , \mathbf{X}_j)$, such that $c \in [0.5, 1]$. Thus, the merging error is at most identical to the cosine distance between the two tokens.
\end{example}

% Given two normalized tokens $\mathbf{X}_i$ and $\mathbf{X}_j$ with single merging sizes ($n_i$ = 1, $n_j$ = 1), the merging error induced by Eq. (\ref{eq:merging_error}) is represented as:
% $$
%    \frac{1}{2} \delta (\mathbf{X}_i , \mathbf{X}_j) \leq \mathcal{E}_{m}(\mathbf{X}_i, \mathbf{X}_j) \leq \delta (\mathbf{X}_i , \mathbf{X}_j).
% $$
% Here, the merging error is solely determined by the cosine distance, as both tokens only have single merging sizes.

% Then, the merging error is given by:
% \begin{align*}
%    \mathcal{E}_{m}(\mathbf{X}_i, \mathbf{X}_j) 
%    &= \delta(\mathbf{X}_i, \mathbf{X}_{ij}) + \delta(\mathbf{X}_j, \mathbf{X}_{ij}) \\
%    &\approx \delta(\mathbf{X}_i, \mathbf{X}_j)
% \end{align*}
% In this case, the loss is determined solely by the cosine distance, as sboth tokens have equal single merging sizes.

\begin{example}[Small + Large]
    If we let $n_i = 100$ and $n_j = 1$ for the case of $n_i \gg n_j$, then we get the merging error of $c \times \delta(\mathbf{X}_i , \mathbf{X}_j)$, such that $c \in [0.99, 1.99]$, obtained from Eq. (\ref{eq:merging_error}). The merging error of this case would be a bit greater than that of the `\textit{small + small}' case, but it is still dominated by the cosine distance alone rather than either of $n_i$ or $n_j$. 
\end{example}

\begin{example}[Large + Large]
    Finally, we consider the case of both tokens having large merging sizes, such as $n_1 = n_2 = 100$. The resulting merging error computed by Eq. (\ref{eq:merging_error}) comes to at least $50 \times \delta(\mathbf{X}_i , \mathbf{X}_j)$ and at most $100 \times \delta(\mathbf{X}_i , \mathbf{X}_j)$, which becomes a few orders of magnitude larger, compared to not only the `small + small' case but also the `small + large' case.
\end{example}

\begin{figure*}[t!]
    \centering
    \subfigure[DeiT-T\label{fig:4a}]{\includegraphics[width=0.32\textwidth]{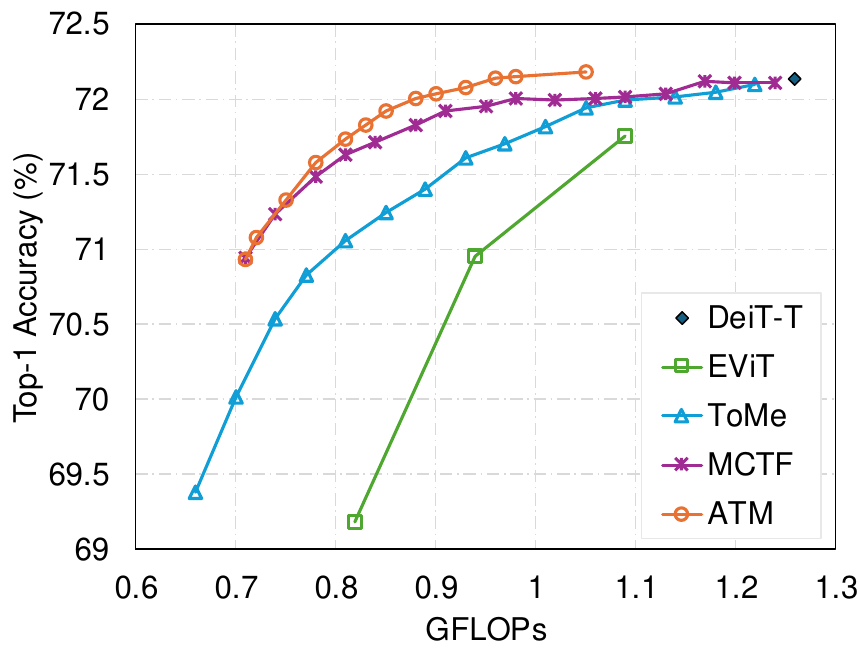}}
    \subfigure[DeiT-S\label{fig:4b}]{\includegraphics[width=0.32\textwidth]{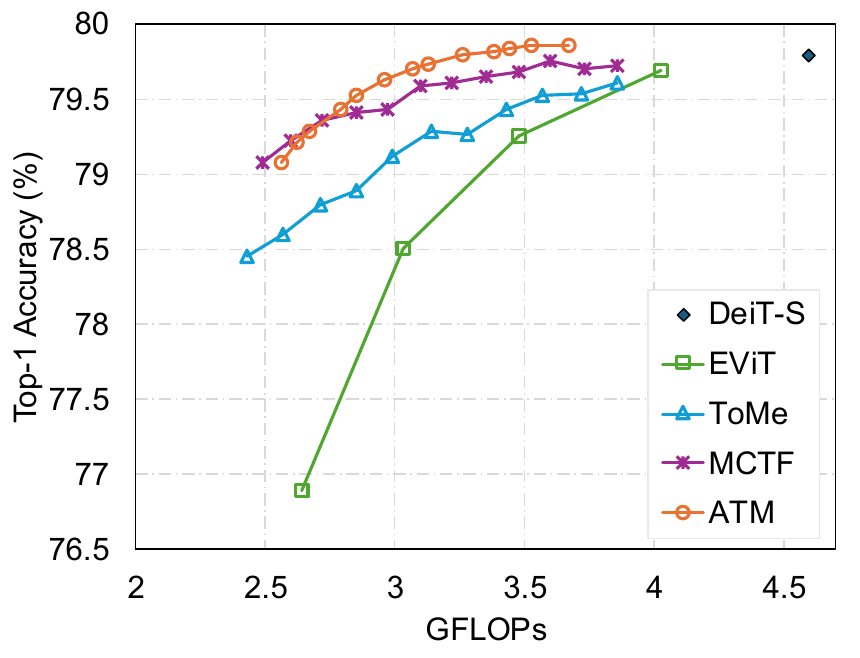}}
    \subfigure[DeiT-B\label{fig:4c}]{\includegraphics[width=0.32\textwidth]{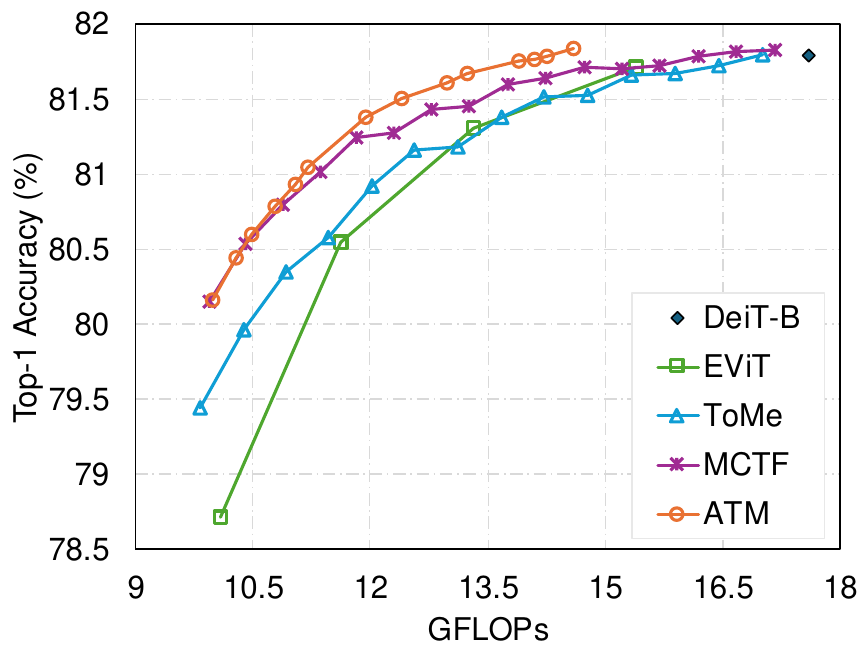}}
    \caption{\textbf{Overall performance comparison of token compression methods on DeiT without extra training.}}
    \label{fig4}
    % \vspace{20pt}
\end{figure*}

\begin{algorithm}[t!]
\footnotesize
    \caption{Layer-Wise Adaptive Token Merging}
    \label{alg}
    \textbf{Input}: Layer depth $l$, Input batch $B$, Batched  input tokens $\mathbf{X}^{l}$\\
    \textbf{Output}: Merged tokens $\mathbf{X}^{l+1}$
    \begin{algorithmic}[1] %[1] enables line numbers
        % \STATE $L_{later} \gets \text{round}(D/4)$
        % \STATE $L_{split} \gets D - L_{later}$

        \STATE $N \gets $ the number of tokens in $\mathbf{X}^{l}$

        \STATE $l_{s} \gets$ the splitting layer depth to select matching strategy

        \IF{$l \in [1,l_{s}]$}
            \STATE $\mathbf{X}_{src}, \mathbf{X}_{dst} \gets  \mathbf{X}^{l}$[:,::2,:], $ \mathbf{X}^ {l}$[:, 1::2,:]
        \ELSE
            \STATE $\mathbf{X}^{l}$ $\gets$ SortByMergingSize($\mathbf{X}^{l}$)
            \STATE $\mathbf{X}_{src}, \mathbf{X}_{dst} \gets \mathbf{X} ^ {l}$[:,:$N$/2,:], $\mathbf{X} ^ {l}$[:, $N$/2:,:]
        \ENDIF

        \STATE $P \gets$ GetTokenPairIndices($\mathbf{X}_{src}$, $\mathbf{X}_{dst}$)
        \STATE $\theta^l \gets \max\{\alpha - (e^{\beta \cdot (l - 1)} - 1), \theta_{\text{min}}\}$
        \STATE $r^{B,l} \gets \frac{1}{|B|} \sum_{b \in B} \sum_{(i,j) \in P} \mathds{1}\left(\textsc{cos}(\mathbf{X}_i^{b,l},~\mathbf{X}_j^{b, l}) > \theta^l\right)$ 
        \STATE $\mathbf{X}^{l+1}$ $\gets$ MergeTokens($\mathbf{X}^{l}$, $P$, $r^{B,l}$)
        \RETURN $\mathbf{X}^{l+1}$

    \end{algorithmic}
\end{algorithm}

\smalltitle{Size-Distinctive Token Matching}
Based on the observation in Theorem \ref{thm:merging}, we propose \textit{size-distinctive token matching}, which replaces the typical alternate grouping with size-distinctive grouping in bipartite token matching for the final layers. More specifically, as illustrated in Figure \ref{fig3}, we first sort the tokens by their merging sizes and then split the sorted group in half to form two groups, one containing tokens with small merging sizes and the other containing with large merging sizes. This size-distinctive grouping ensures that token pairs with large merging sizes are not merged together, thereby minimizing the information loss potentially caused by high merging error in the final layers.

\subsection{Batch-Adaptive Merging}
Although our threshold-based merging would nicely work with individual images, varying the number of tokens across images disables batched inference, which has justified existing static approaches \cite{BolyaFDZFH23,KimGHSJ24} in practice. In the proposed ATM method, we address this limitation by supporting batch-adaptive merging, similar to one in token pruning \cite{LiuWG23}. By batch-adaptive merging, we can still take advantage of adaptive token reduction while achieving practical efficiency of processing multiple images simultaneously. More specifically, for each batch $B$, we adaptively define the following number $r^{B, l}$ of tokens to reduce for each layer $l$:
\begin{align}
    r^{B, l} = &\frac{1}{|B|} \sum_{b \in B} \sum_{(i,j) \in P} \mathds{1}\left(\textsc{cos}(\mathbf{X}_i^{b, l},~\mathbf{X}_j^{b, l}) > \theta^l\right), \nonumber
\end{align}
where (i) $\textsc{cos}(\mathbf{X}_i^{b, l},~\mathbf{X}_j^{b, l})$ is the cosine similarity between a pair of token pairs, $\mathbf{X}_i^{b,l}$ and $\mathbf{X}_j^{b,l}$, in layer $l$ for each image $b$ in the batch, (ii) $P$ is the set of pairs of token indices $(i, j)$ to be compared between two groups, and (iii) $\theta^l$ is the layer-dependent threshold determined by Eq. (\ref{eq3}). Thus, depending on how many tokens are similar enough on the average over the batch, we dynamically determine $r^{B, l}$, which also varies across layers by utilizing our layer-dependent thresholding.

\smalltitle{Overall Process}
In summary, our entire ATM process is conceptually shown in Figure \ref{fig3} and its detailed steps are presented in Algorithm \ref{alg}. In this algorithm, we first select a different matching strategy based on the current layer depth (either alternate matching in Lines 3--4 or size-distinctive matching in Lines 5--7), and then identify token pairs with the highest similarities using layer-dependent thresholding (Line 10). Subsequently, we determine the appropriate number of token pairs to merge for batched inference (Line 11), and finally perform the token merging process (Line 12). Through these comprehensive techniques, we can merge highly similar token pairs even with batched processing while minimizing information loss during the merging process.

\section{Experiments}
In this section, we present a detailed performance study of ATM with a focus on its effectiveness in a training-free scenario, while also comparing it to training-required methods, highlighting its competitive performance without fine-tuning.

% not only comparing with state-of-the-art token compression methods on various pretrained models but also experimentally investigating our layer-dependent thresholding by comparing possible alternatives in token merging. Lastly, we compare ATM with training-required methods, highlighting its competitive performance without expensive fine-tuning.

% This section begins with the experimental setup, followed by comparative experiments of ATM across various pretrained models in a training-free scenario.
% We then present our merging design principles and visualization results.
% Finally, we compare ATM with training-required methods, highlighting its competitive performance and advantages in terms of inference speed and training time.

\subsection{Experimental Setup}
\smalltitle{Dataset, Models, and Metrics}
To validate ATM's performance, we employ the widely-used ImageNet-1K classification dataset \cite{DengDSLL009} for comparative experiments using various pretrained models, including DeiT \cite{TouvronCDMSJ21}, LV-ViT \cite{JiangHYZSJWF21}, and T2T-ViT \cite{0007CWYSJTFY21} families, as well as a self-supervised pretrained model (MAE), to demonstrate the broad applicability of our method. For performance assessment, we measure the three crucial metrics: top-1 accuracy, floating-point operations (FLOPs), and training time.

\smalltitle{Implementation Details}
% For batch size for all pretrained models but T2T-ViT\footnote{Due to insufficient GPU memory, we use 100 for T2T-ViT, according to its original default batch size.}, we consistently use 1024 by default.
For all pretrained models but T2T-ViT\footnote{Due to insufficient GPU memory, we use 100 for T2T-ViT, according to its original default batch size.}, we consistently use a batch size of 1024 by default. To ensure comprehensive coverage of state-of-the-art results, we leverage official implementations whenever available and also include partial results obtained from reference papers and their codebases. It is worth emphasizing that the tables and figures in this paper roughly form a \textit{superset} of previously reported results from the existing works. All experiments are conducted on a server equipped with an NVIDIA RTX A6000 GPU. See Appendix \ref{app:implement} for further implementation details.

% For performance comparison in a training-free scenario, we do not conduct additional training in the compared methods as well as in ATM. When comparing with training-required methods, we fine-tune the models resulting from ATM only for 30 epochs, which is notably fewer than those of existing methods, to demonstrate the training efficiency and effectiveness of ATM.

% For performance comparison in a training-free scenario, we do not conduct additional training in the compared methods as well as in ATM. When comparing with training-required methods, we fine-tune the models resulting from ATM only for 30 epochs, which is notably fewer than those of existing methods, to demonstrate the training efficiency and effectiveness of ATM.

% , which is relatively large and therefore potentially not beneficial for ATM with respect to its adaptive performance

% We perform most comparative experiments without any additional training across various pretrained models, including DeiT \cite{TouvronCDMSJ21}, LV-ViT \cite{JiangHYZSJWF21}, and T2T-ViT \cite{0007CWYSJTFY21} families, as well as a self-supervised pretrained model (MAE), to demonstrate the broad applicability of our method.
% Moreover, we apply light fine-tuning of 30 epochs to pretrained models, which is notably fewer than that of other existing methods, to demonstrate that ATM can also improve accuracy similar to other existing methods.

\subsection{Performance in Training-Free Scenario}
We first evaluate the performance of ATM without any additional training, compared to state-of-the-art training-free token compression methods, including EViT \cite{LiangGTS0X22}, ToMe \cite{BolyaFDZFH23}, MCTF \cite{Leeck24}, and Zero-TP \cite{WangDj24}. For this performance test, we mainly use \textit{off-the-shelf} DeiT family, as it is most widely adopted in the experiments of token compression.

\begin{table}[t!]
\caption{\textbf{Stress test of token compression without allowing accuracy drop of DeiT without extra training.}}
% \vspace{20pt}
\centering
\resizebox{0.9\columnwidth}{!}{%
\begin{tabular}{ccccc}
\Xhline{2.0\arrayrulewidth}
Model                    & Method                                & \begin{tabular}[c]{@{}c@{}}Params\\ (M)\end{tabular} & \begin{tabular}[c]{@{}c@{}}Top-1 Acc.\\ (\%)\end{tabular} & GFLOPs                                        \\ \hline
                         & Baseline                              & 5                                                    & 72.1                                                      & 1.3                                           \\ 
                         & EViT                                 & 5                                                    & 72.1                                                      & 1.2 (-7\%)                                    \\
                         & ToMe                                 & 5                                                    & 72.1                                                      & 1.2 (-7\%)                                    \\
                         & MCTF                                 & 5                                                    & 72.1                                                      & 1.2 (-7\%)                                    \\
\multirow{-5}{*}{DeiT-T} & \cellcolor[HTML]{EFEFEF}\textbf{ATM} & \cellcolor[HTML]{EFEFEF}\textbf{5}                   & \cellcolor[HTML]{EFEFEF}\textbf{72.1}                     & \cellcolor[HTML]{EFEFEF}\textbf{0.9 (-31\%)}  \\ \hline
                         & Baseline                              & 22                                                   & 79.8                                                      & 4.6                                           \\
                         & EViT                                 & 22                                                   & 79.8                                                      & 4.2 (-9\%)                                    \\
                         & ToMe                                 & 22                                                   & 79.8                                                      & 4.2 (-9\%)                                    \\
                         & MCTF                                 & 22                                                   & 79.8                                                      & 3.6 (-22\%)                                   \\
                         & Zero-TP                              & 22                                                   & 79.8                                                      & 4.0 (-13\%)                                   \\
\multirow{-6}{*}{DeiT-S} & \cellcolor[HTML]{EFEFEF}\textbf{ATM} & \cellcolor[HTML]{EFEFEF}\textbf{22}                  & \cellcolor[HTML]{EFEFEF}\textbf{79.8}                     & \cellcolor[HTML]{EFEFEF}\textbf{3.2 (-30\%)}  \\ \hline
                         & Baseline                              & 86                                                   & 81.8                                                      & 17.6                                          \\
                         & EViT                                 & 86                                                   & 81.8                                                      & 15.9 (-10\%)                                  \\
                         & ToMe                                 & 86                                                   & 81.8                                                      & 17.0 (-3\%)                                   \\
                         & MCTF                                 & 86                                                   & 81.8                                                      & 16.2 (-8\%)                                   \\
\multirow{-5}{*}{DeiT-B} & \cellcolor[HTML]{EFEFEF}\textbf{ATM} & \cellcolor[HTML]{EFEFEF}\textbf{86}                  & \cellcolor[HTML]{EFEFEF}\textbf{81.8}                     & \cellcolor[HTML]{EFEFEF}\textbf{13.9 (-21\%)} \\ \Xhline{2.0\arrayrulewidth}
\end{tabular}%
}
\label{tab1}
\end{table}

% \footnote{A version without learnable parameters}

\smalltitle{Overall Performance}
In Figure \ref{fig4}, we present the overall performance of compared methods, showing how the resulting accuracy changes as we vary the target computational cost (i.e., FLOPs). It is observed that ATM consistently outperforms the other methods throughout all three graphs, positioning in the top-left corner of the graphs, indicating higher accuracy and lower computational cost.

\smalltitle{Stress Testing}
In various practical scenarios, such as safety-critical systems, it is often demanding not to compromise model's original performance while improving the computational efficiency. In Table \ref{tab1}, we conduct a \textit{stress test} on DeiT to assess the maximum token compression achievable by each method without any loss in original accuracy of baseline. In this stress test, ATM reaches the highest compression level with no drop in accuracy, clearly setting a new benchmark, compared to the existing methods. Notably, for the DeiT-T and DeiT-S models, ATM endures over 30\% reduction in FLOPs without any performance degradation. These results confirm that ATM can effectively preserve the representational power of the baseline model, while performing substantial reduction in computational costs.

\smalltitle{Comparison on Various Backbones}
Beyond DeiT, we further conduct performance comparison using various architectures and self-supervised trained models. For a fair comparison, we adjust FLOPs of ATM to be less than or equal to those of referenced works, and report the resulting accuracy. As shown in Table \ref{tab2}, our ATM method always shows the best performance in every backbone model. Particularly, for T2T-ViT$_\text{t}$-19 model, we observe the largest accuracy gap between ToMe \cite{BolyaFDZFH23} and ATM. Considering that ToMe is a static token merging method and T2T-ViT has a deep-narrow structure, where information loss could accumulate substantially, this outcome proves the effectiveness of our layer-dependent thresholding approach.

\begin{table}[t!]
\caption{\textbf{Performance comparison on various ViT backbones without extra training.}}
% \vspace{20pt}
\centering
\resizebox{0.9\columnwidth}{!}{%
\begin{tabular}{ccccc}
\Xhline{2.0\arrayrulewidth}
Model                           & Method                               & GFLOPs                                & \begin{tabular}[c]{@{}c@{}}Params\\ (M)\end{tabular} & \begin{tabular}[c]{@{}c@{}}Top-1 Acc.\\ (\%)\end{tabular} \\ \hline
                                & Baseline                             & 1.3                                   & 5                                                    & 72.1                                                      \\
                                & EViT                                 & 0.9                                   & 5                                                    & 71.0 (-1.1)                                                     \\
                                & ToMe                                 & 0.9                                   & 5                                                    & 71.6 (-0.5)                                                     \\
                                & MCTF                                 & 0.9                                   & 5                                                    & 71.9 (-0.2)                                                     \\
\multirow{-5}{*}{DeiT-T}        & \cellcolor[HTML]{EFEFEF}\textbf{ATM} & \cellcolor[HTML]{EFEFEF}\textbf{0.9}  & \cellcolor[HTML]{EFEFEF}\textbf{5}                   & \cellcolor[HTML]{EFEFEF}\textbf{72.1 (-)}                     \\ \hline
                                & Baseline                             & 4.6                                   & 22                                                   & 79.8                                                      \\
                                & EViT                                 & 3.0                                   & 22                                                   & 78.5 (-1.3)                                                     \\
                                & ToMe                                 & 3.0                                   & 22                                                   & 79.1 (-0.7)                                                     \\
                                & MCTF                                 & 3.0                                   & 22                                                   & 79.5 (-0.3)                                                     \\
                                & Zero-TP                              & 3.0                                   & 22                                                   & 79.4 (-0.4)                                                     \\
\multirow{-6}{*}{DeiT-S}        & \cellcolor[HTML]{EFEFEF}\textbf{ATM} & \cellcolor[HTML]{EFEFEF}\textbf{3.0}  & \cellcolor[HTML]{EFEFEF}\textbf{22}                  & \cellcolor[HTML]{EFEFEF}\textbf{79.7 (-0.1)}                     \\ \hline
                                & Baseline                             & 2.9                                   & 8.5                                                  & 79.1                                                      \\
                                & ToMe                                 & 2.0                                   & 8.5                                                  & 78.4 (-0.7)                                                     \\
\multirow{-3}{*}{LV-ViT-T}      & \cellcolor[HTML]{EFEFEF}\textbf{ATM} & \cellcolor[HTML]{EFEFEF}\textbf{2.0}  & \cellcolor[HTML]{EFEFEF}\textbf{8.5}                 & \cellcolor[HTML]{EFEFEF}\textbf{78.6 (-0.5)}                     \\ \hline
                                & Baseline                             & 6.6                                   & 26.2                                                 & 83.3                                                      \\
                                & EViT                                 & 3.9                                   & 26.2                                                 & 79.8 (-3.5)                                                     \\
                                & ToMe                                 & 3.6                                   & 26.2                                                 & 81.3 (-2.0)                                                     \\
                                & Zero-TP                              & 3.5                                   & 26.2                                                 & 81.5 (-1.8)                                                     \\
\multirow{-5}{*}{LV-ViT-S}      & \cellcolor[HTML]{EFEFEF}\textbf{ATM} & \cellcolor[HTML]{EFEFEF}\textbf{3.5}  & \cellcolor[HTML]{EFEFEF}\textbf{26.2}                & \cellcolor[HTML]{EFEFEF}\textbf{82.1 (-1.2)}                      \\ \hline
                                & Baseline                             & 6.1                                   & 21.5                                                 & 81.7                                                      \\
                                & ToMe                                 & 4.1                                   & 21.5                                                 & 81.1 (-0.6)                                                     \\
\multirow{-3}{*}{T2T-ViT$_\text{t}$-14} & \cellcolor[HTML]{EFEFEF}\textbf{ATM} & \cellcolor[HTML]{EFEFEF}\textbf{4.1}  & \cellcolor[HTML]{EFEFEF}\textbf{21.5}                & \cellcolor[HTML]{EFEFEF}\textbf{81.4 (-0.3)}                     \\ \hline
                                & \cellcolor[HTML]{FFFFFF}Baseline     & \cellcolor[HTML]{FFFFFF}9.8           & \cellcolor[HTML]{FFFFFF}39.2                         & \cellcolor[HTML]{FFFFFF}82.4                              \\
                                & ToMe                                 & 4.8                                   & 39.2                                                 & 78.9 (-3.5)                                                     \\
\multirow{-3}{*}{T2T-ViT$_\text{t}$-19} & \cellcolor[HTML]{EFEFEF}\textbf{ATM} & \cellcolor[HTML]{EFEFEF}\textbf{4.8}  & \cellcolor[HTML]{EFEFEF}\textbf{39.2}                & \cellcolor[HTML]{EFEFEF}\textbf{81.2 (-1.2)}                     \\ \hline
                                & \cellcolor[HTML]{FFFFFF}Baseline     & \cellcolor[HTML]{FFFFFF}17.6          & \cellcolor[HTML]{FFFFFF}86                           & \cellcolor[HTML]{FFFFFF}83.7                              \\
                                & EViT                                 & 11.5                                  & 86                                                   & 82.0 (-1.7)                                                     \\
                                & ToMe                                 & 11.5                                  & 86                                                   & 82.3 (-1.4)                                                     \\
\multirow{-4}{*}{MAE}           & \cellcolor[HTML]{EFEFEF}\textbf{ATM} & \cellcolor[HTML]{EFEFEF}\textbf{11.5} & \cellcolor[HTML]{EFEFEF}\textbf{86}                  & \cellcolor[HTML]{EFEFEF}\textbf{82.5 (-1.2)}                     \\ \Xhline{2.0\arrayrulewidth}
\end{tabular}%
}
\label{tab2}
\end{table}

% To further validate the flexibility of ATM, we conduct comparative experiments across various architectures and self-supervised pretrained models.
% In these experiments, we shift our focus from accuracy preservation to comparing the accuracy decline among training-free methods, following previous studies.
% These comparisons are also conducted using \textit{off-the-shelf} models without any additional training.
% As shown in Table 2, our ATM outperforms all existing state-of-the-art methods across various pretrained models.
% Notably, for T2T-ViT$_\text{t}$-19 model, we observe that the accuracy gap between ToMe and ATM is significantly large. 
% Considering that T2T-ViT has a deep-narrow structure, where information loss could accumulate substantially, this outcome further underscores the necessity of our layer-dependent threshold approach.

\subsection{Ablation Studies}

\smalltitle{Effectiveness of Layer-Dependent Thresholding}
To examine the effectiveness of our layer-dependent thresholding, we compare our merging scheme to possible alternative merging schedules presented in Figure \ref{fig5}, including both top-\textit{r} and threshold-based variants. For this analysis, we equalize the matching procedure for all the compared approaches to be alternate matching. Unlike constant top-$r$, which represents a pure static approach of constantly merging $r$ token pairs as in ToMe \cite{BolyaFDZFH23}, increasing and decreasing top-$r$ approaches linearly increment and decrement $r$ by one per layer, respectively. Constant thresholding is threshold-based merging yet with a fixed threshold across layers. By properly adjusting threshold (not just incrementing or decrementing $r$ itself), our layer-dependent thresholding manages to achieve the best performance for all settings in Figure \ref{fig5}, followed by the increasing top-$r$ approach. These results further validate our analysis in Figure \ref{fig2}, and well support our claim that more similar tokens should be merged in initial layers to reduce the accumulated loss. Interestingly, the increasing top-$r$ approach naturally follows this desired merging scheme, which explains its second position in the graph. Meanwhile, constant thresholding shows unexpectedly low performance, even though it is also a type of adaptive merging using a constant threshold. 

To reveal the underlying reason of this large gap between constant and layer-dependent thresholding, we presents the actual numbers of merged tokens across layers for the same target computational cost (33\% reduction in FLOPs) in Figure \ref{fig6}. Each measurement is the average over all the images in the ImageNet-1K dataset. It turns out that constant thresholding excessively merges tokens in the first and last layers, not only causing information loss in the initial layers but also leading to inefficient merging that heavily relies on the reduction at the end of the inference pass. On the other hand, our layer-dependent thresholding approach significantly reduces the merging of dissimilar token pairs in the first layer and inefficient merging in the last layers by properly decrementing layer-wise thresholds. This enables ATM to reduce information loss and to enhance the merging efficiency.

% [기존] In Appendix \ref{app:thresh}, we further report the actual number of merged tokens across layers, revealing the underlying reason of this large gap between constant and layer-dependent thresholding.

% [Appendix 문단] Figure \ref{fig6} presents the actual numbers of merged tokens across layers when using either constant or layer-dependent thresholding for the same target computational cost (33\% reduction in FLOPs). Each measurement is the average over all the images in the ImageNet-1K dataset. It turns out that constant thresholding excessively merges tokens in the first and last layers, not only causing information loss in the initial layers but also leading to inefficient merging that heavily relies on the reduction at the end of the inference pass. On the other hand, our layer-dependent thresholding approach significantly reduces the merging of dissimilar token pairs in the first layer and inefficient merging in the last layers by properly decrementing layer-wise thresholds. This enables ATM to reduce information loss and to enhance the merging efficiency.

\begin{table*}[t!]
\caption{\textbf{Sensitivity analysis of the splitting layer depth.} $l_s$ denotes the last layer depth where alternative matching is applied. Thus, `$l_s = 0$' or `$l_s = 12$' corresponds to applying size-distinctive matching or alternative matching across all layers, respectively.}
% \vspace{20pt}
\centering
\resizebox{0.9\textwidth}{!}{%
\begin{tabular}{cccccccccc
>{\columncolor[HTML]{EFEFEF}}c ccc}
\Xhline{2.0\arrayrulewidth}
$l_s$           & 0     & 1     & 2     & 3     & 4     & 5     & 6     & 7     & 8     & 9     & 10    & 11    & 12    \\ \hline
Top-1 Acc. (\%) & 79.68 & 79.74 & 79.68 & 79.69 & 79.70 & 79.69 & 79.64 & 79.66 & 79.70 & 79.76 & 79.67 & 79.65 & 79.65 \\
GFLOPs          & 3.95  & 3.38  & 3.32  & 3.30  & 3.33  & 3.31  & 3.30  & 3.27  & 3.25  & 3.23  & 3.22  & 3.21  & 3.21  \\ \Xhline{2.0\arrayrulewidth}
\end{tabular}%
}
\label{tab:A1}
\end{table*}

\begin{figure}[t!]
    \centering
    \includegraphics[width=0.84\columnwidth]{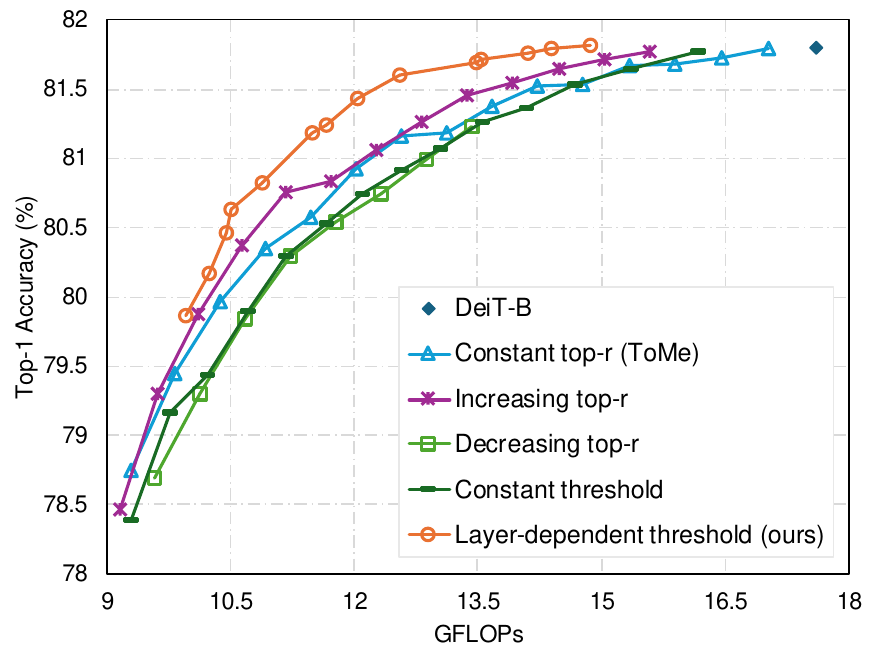}
    % \vspace{-3mm}
    \caption{\textbf{Performance comparison on merging schedules.}}
    % \vspace{20pt}
    \label{fig5}
\end{figure}

\begin{figure}[t!]
    \centering
    \subfigure[Constant thresholding\label{fig:6a}]{\includegraphics[width=0.48\columnwidth]{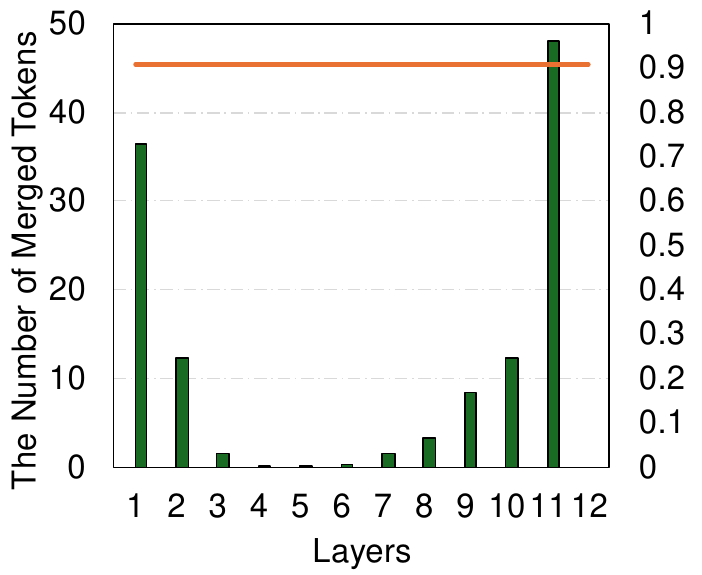}}
    \subfigure[Layer-dependent thresholding\label{fig:6b}]{\includegraphics[width=0.48\columnwidth]{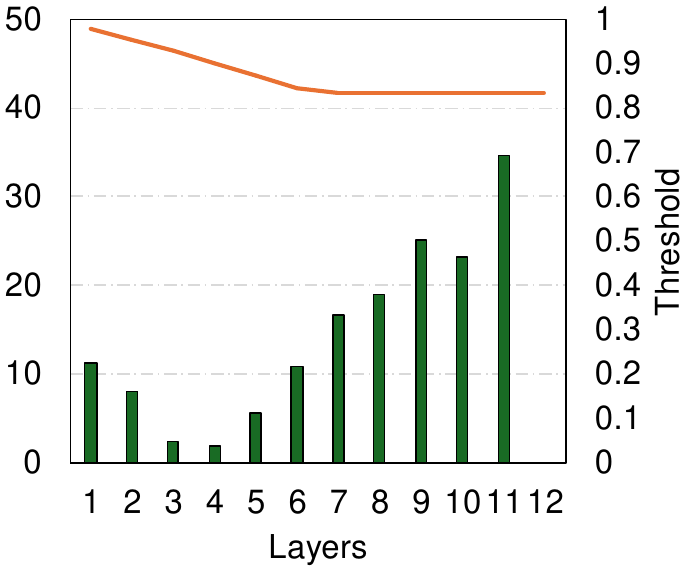}}
    % \vspace{-3mm}
    \caption{\textbf{Distribution of the numbers of merged tokens across layers.} Each orange line illustrates the threshold value along the layers, and each green bar represents the number of merged tokens for the corresponding layer.} 
    % \vspace{20pt}
    \label{fig6}
\end{figure}

\begin{figure*}[t!]
    \centering
    \includegraphics[width=0.93\textwidth]{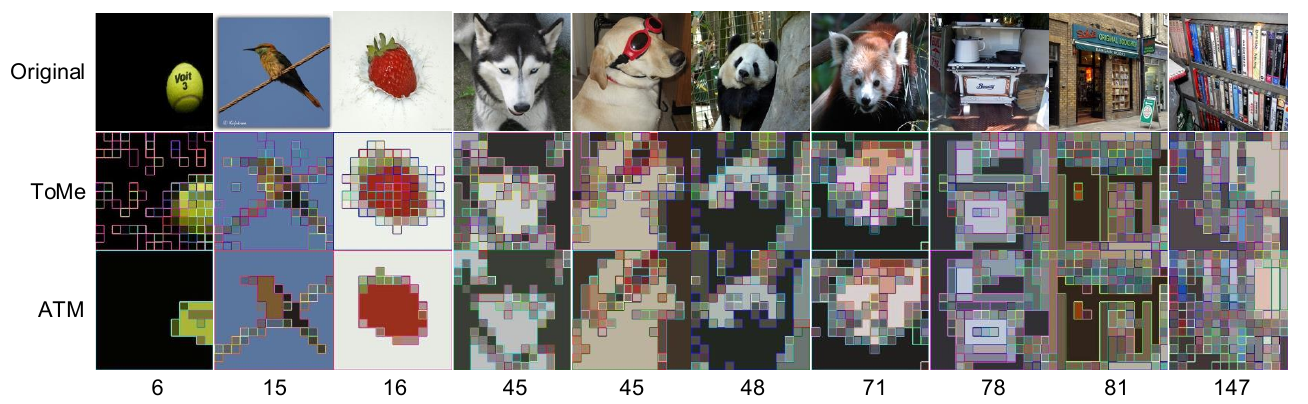} % Reduce the figure size so that it is slightly narrower than the column. Don't use precise values for figure width.This setup will avoid overfull boxes.
    % \vspace{-2mm}
    \caption{\textbf{Visualizations.} Each number indicates the number of finally remaining tokens in ATM, while equally 77 tokens are left in all images resulting from ToMe.}
    % \vspace{20pt}
    \label{fig7}
\end{figure*}

\smalltitle{Alternate Matching vs. Size-Distinctive Matching}
Another important consideration of ATM is how to adjust the splitting layer depth, which defines the boundary between applying alternate and size-distinctive matching. Table \ref{tab:A1} presents the performance results for various splitting layer depths, where $l_s \in [0, 12]$ denotes the last layer of alternate matching. Note that due to ATM's adaptive behavior, its resulting computational cost (i.e., FLOPs) can differ with different $l_s$ settings. It is observed that the best performance can be obtained when size-distinctive matching is applied to roughly the final quarter of the total layer depth, whereas the performance decreases when size-distinctive matching is not used, as seen in the case of $l_s = 12$. Meanwhile, starting size-distinctive matching too early, such as at $l_s = 0$, also results in reduced performance as well as relatively higher FLOPs, which confirms that merging sizes have less impact in the initial layers, as mentioned in Section \ref{sec:size}. Based on these observations, we set the splitting layer depth at approximately three-quarters of the total layer depth in all our experiments.

\subsection{Visualizations}
To qualitatively analyze the performance of ATM, particularly compared to static top-$r$ merging (i.e., ToMe \cite{BolyaFDZFH23}), Figure \ref{fig7} presents visualizations using different images of various complexities, arranging from simple to complex (left to right). Each resulting compressed image is obtained from the final layer of DeiT-S after applying either ToMe or ATM with the same reduction rate in FLOPs. As clearly observed, static merging applies a uniform merging quantity regardless of the complexity of images, showing insufficient merging for a simple sample yet excessive merging for a complex one. For example, the left-most image has a large area of background, in which however many tokens remain not merged in ToMe. In contrast, our ATM method adaptively controls the level of merging quantity based on the complexity of images. More comprehensive visualizations are provided in the Appendix \ref{app:vis}.

% To qualitatively demonstrate the superiority of adaptive merging over top-\textit{r} merging \cite{BolyaFDZFH23}, we present visualizations in Figure \ref{fig7} using image samples of varying complexity, arranged from simple to complex (left to right).
% Figure \ref{fig7} illustrates token merging results at the final layer of the DeiT-S model. 
% Both ToMe and ATM are configured with hyper-parameters targeting identical FLOPs compression rates (e.g., 33\% reduction in FLOPs).
% As evident in this figure, the top-\textit{r} merging method applies a uniform merging quantity across all images, which leads to insufficient merging for simple samples and excessive merging for complex ones.
% For example, in the leftmost image showing a tennis ball, many tokens remain unmerged despite the entirely black background.
% In contrast, our threshold-based merging method adapts to each image sample, achieving an appropriate level of merging based on the image's complexity.
% This adaptive approach enhances both performance preservation and compression efficiency by facilitating optimal token merging.
% For a more comprehensive visualization of layer-dependent merging patterns, please refer to the Appendix.

\subsection{Performance Comparison with Extra Training}
Lastly, we provide performance comparison with other state-of-the-art methods that fully utilize extra training for performance recovery. As shown in Table \ref{tab3}, many of these methods require heavy additional training (e.g., 300 epochs) to restore their degraded performance, thereby achieving their final accuracies. Remarkably, our training-free version of ATM already surpasses most of these computationally-intensive methods, and even with FLOPs reduced to 2.6, achieves zero accuracy drop with only 30 epochs of fine-tuning. Interestingly, MCTF with additional training reports an accuracy even higher than that of the original model. This is attributed to their distillation-like strategy, requiring an additional teacher model and training process, thereby increasing the overall training time substantially. In contrast, we simply follow the general training procedure of the baseline model \cite{TouvronCDMSJ21}, and thanks to our minimal damage during token compression, ATM enables fast training time while still achieving baseline accuracy.

\begin{table}[t!]
\caption{\textbf{Performance comparison on DeiT-S with additional training.} `$\dagger$' indicates training-free versions.}
% \vspace{20pt}
\resizebox{\columnwidth}{!}{%
\begin{tabular}{ccccc}
\Xhline{2.0\arrayrulewidth}
Method                      & GFLOPs                      & 
\begin{tabular}[c]{@{}c@{}}Top-1 Acc.\\ (\%)\end{tabular} & Epochs
& \begin{tabular}[c]{@{}c@{}}Training Time\\ (hours)\end{tabular}
\\ \hline
                          DeiT-S (Baseline)                    & 4.6                         & 79.8                                                      & -        & -                          \\
                          DynamicViT \cite{RaoZLLZH21}                 & 2.9                         & 79.3 (-0.5)                                               & 30        & 43                         \\
                           IA-RED$^2$ \cite{PanPJWFO21}                     & 3.2                         & 79.1 (-0.7)                                               & 90       & -                          \\
                          Evo-ViT \cite{XuZZSLDZXS22}                      & 3.0                         & 79.4 (-0.4)                                               & 300         & 126                      \\
                          EViT \cite{LiangGTS0X22}                        & 3.0                         & 79.5 (-0.3)                                               & 300        & 95                       \\
                          A-ViT \cite{YinVAMKM22}                       & 3.6                         & 78.6 (-1.2)                                               & 100           & 65                   \\
                          ToMe \cite{BolyaFDZFH23}                      & 2.7                         & 79.4 (-0.4)                                               & 300             & 87                 \\
                          BAT \cite{LongZP0023}                         & 3.0                         & 79.6 (-0.2)                                               & 300            & -                  \\
                          MCTF \cite{Leeck24}                        & \textbf{2.6}                & \textbf{80.1 (+0.3)}                                      & 30              & 39                 \\
                           \cellcolor[HTML]{EFEFEF}ATM$^\dagger$ & \cellcolor[HTML]{EFEFEF}3.0 & \cellcolor[HTML]{EFEFEF}79.7 (-0.1)                       & \cellcolor[HTML]{EFEFEF}\textbf{0} 
                                        & \cellcolor[HTML]{EFEFEF}- \\
 \cellcolor[HTML]{EFEFEF}ATM & \cellcolor[HTML]{EFEFEF}\textbf{2.6} & \cellcolor[HTML]{EFEFEF}79.8 (-)                          & \cellcolor[HTML]{EFEFEF}30   
                                            & \cellcolor[HTML]{EFEFEF} \textbf{11} \\ \Xhline{2.0\arrayrulewidth}
\end{tabular}%
}
\label{tab3}
\end{table}

% \begin{table}[t!]
% \caption{\textbf{ATM vs. MCTF on throughput and training time}}
% \label{tab4}
% \resizebox{\columnwidth}{!}{%
% \begin{tabular}{ccccc}
% \hline
% Method & GFLOPs       & \begin{tabular}[c]{@{}c@{}}Top-1 Acc.\\ (\%)\end{tabular} & \begin{tabular}[c]{@{}c@{}}Throughput\\ (img/s)\end{tabular} & \begin{tabular}[c]{@{}c@{}}Training Time\\ (hours)\end{tabular} \\ \hline
% DeiT-S & 4.6          & 79.8                                                      & 2592                                                         & 14                                                              \\
% MCTF   & \textbf{2.6 (-43\%)} & \textbf{80.1 (+0.3)}                                             & 2370 (-222)                                                        & 39 (+15)                                                    \\
% \rowcolor[HTML]{EFEFEF} 
% ATM    & 2.7 (-41\%)          & 79.8  (-)                                                   & \textbf{3143 (+551)}                                                & \textbf{11 (-3)}                                             \\ \hline
% \end{tabular}%
% }
% \end{table}
\section{Conclusion}
In this work, we introduced Adaptive Token Merging (ATM), a novel method designed for lossless reduction of redundant tokens without requiring additional training.
ATM utilizes a threshold-based merging approach that adapts to the characteristics of individual images, while dynamically adjusting thresholds across layers. Moreover, based on our theoretical insights on merging error, ATM applies a novel size-distinctive token matching strategy in the final layers to avoid undesirable merging of tokens with large merging sizes. Building upon this, ATM supports batch-adaptive merging, addressing the practical need for batch processing while maintaining adaptive token reduction. Through extensive experiments, we demonstrated that ATM delivers exceptional performance, both quantitatively and qualitatively, even outperforming most of the training-intensive methods without the need for extra training.

\begin{ack}
This work was supported in part by Institute of Information \& communications Technology Planning \& Evaluation
(IITP) grants funded by the Korea government(MSIT) (No.2022-0-00448, Deep Total Recall: Continual Learning
for Human-Like Recall of Artificial Neural Networks, No.RS-2022-00155915, Artificial Intelligence Convergence
Innovation Human Resources Development (Inha University)), and in part by INHA UNIVERSITY Research Grant.
% By using the \texttt{ack} environment to insert your (optional) acknowledgements, you can ensure that the text is suppressed whenever you use the \texttt{doubleblind} option. In the final version, acknowledgements may be included on the extra page intended for references.
\end{ack}

\bibliography{mybibfile}

\clearpage

\setcounter{table}{0}
\setcounter{figure}{0}

\renewcommand{\thetable}{A\arabic{table}}
\renewcommand{\thefigure}{A\arabic{figure}}

\appendix

\onecolumn
\section*{\centering Technical Appendix}
\section*{\centering Lossless Token Merging Even Without Fine-Tuning in Vision Transformers}

In this appendix, we first present full implementation details of ATM and our theoretical analysis on merging error. We then provide comprehensive experimental results, including sensitivity analysis of batch sizes, full results with varying FLOPs and more visualizations.

% In this appendix, we first present additional implementation details of ATM and our theoretical analysis on merging error. We then provide comprehensive experimental results, including sensitivity analysis of hyper-parameters, full results with varying FLOPs and more visualizations.

\section{Implementation Details} \label{app:implement}
As illustrated in the main paper, we conduct experiments with various model families including DeiT, LV-ViT, T2T-ViT, and MAE. We merge tokens based on the similarity of key pairs for all models. Among these models, we implement proportional attention except for MAE, which tracks the merging sizes of tokens and incorporates them into the attention matrix, following existing token merging \cite{BolyaFDZFH23,Leeck24}. The proportional attention is formulated as:
$$\mathbf{A} = \text{softmax}\left(\frac{\mathbf{Q}\mathbf{K}^T}{\sqrt{d}} + \log \mathbf{s}\right),
$$
where (i) $\mathbf{A}$ denotes the attention matrix, (ii) $\mathbf{s}$ represents the matrix of token merging sizes, (iii) $d$ is the dimension of the key vectors, and (iv) $\mathbf{Q}$ and $\mathbf{K}$ are query and key vectors, respectively.
Furthermore, since most architectures (e.g., DeiT, LV-ViT, T2T-ViT) solely use a CLS token for image classification in the MLP head after passing through all ViT blocks, we retain only the CLS token and remove all other image tokens in the last layer, instead of merging them, to further reduce computational costs, except for MAE.

For fine-tuning, we use a batch size of 256, 512, or 1024 depending on the available GPU memory with a single NVIDIA RTX A6000 GPU, set the number of warm-up epochs to 0, and set the total number of epochs to 30. We employ a cosine decay schedule for adjusting the learning rate, setting the initial learning rate to 2e-5 and the final learning rate to 3e-6. All other settings follow the baseline model's configuration.

\section{Theoretical Analysis on Merging Error} \label{app:theory}
\subsection{Proof of Theorem \ref{thm:merging}} \label{app:proof}

\smalltitle{Norm of merged tokens}
Considering the weighted averaging formula in Definition \ref{def:merging}, the norm of a merged token $\mathbf{X}_{ij}$ is interpreted as follows:
\begin{align}
    \|\mathbf{X}_{ij}\| 
    &= \sqrt{\frac{(n_i\mathbf{X}_i + n_j\mathbf{X}_j)\cdot(n_i\mathbf{X}_i + n_j\mathbf{X}_j)}{(n_i + n_j)^2}} \nonumber \\
    &= \sqrt{\frac{n_i^2\mathbf{X}_i \cdot \mathbf{X}_i + 2n_in_j\mathbf{X}_i \cdot \mathbf{X}_j + n_j^2\mathbf{X}_j \cdot \mathbf{X}_j}{(n_i + n_j)^2}} \label{eq:norm_frac}\\
    &= \sqrt{\frac{n_i\mathbf{X}_i \cdot (n_i\mathbf{X}_i + n_j\mathbf{X}_j) + n_j\mathbf{X}_j \cdot (n_i\mathbf{X}_i + n_j\mathbf{X}_j)}{(n_i + n_j)^2}} \nonumber \\
    &= \sqrt{\frac{n_i\mathbf{X}_i \cdot \mathbf{X}_{ij} + n_j\mathbf{X}_j \cdot \mathbf{X}_{ij}}{n_i + n_j}}.  \nonumber
\end{align}
Therefore, we have:
\begin{align}
     n_i\mathbf{X}_i \cdot \mathbf{X}_{ij} + n_j\mathbf{X}_j \cdot \mathbf{X}_{ij} = (n_i + n_j)\|\mathbf{X}_{ij}\|^2. \label{eq:norm}
\end{align}

\smalltitle{Proof of Theorem \ref{thm:merging}}
By using this interpretation of $\|\mathbf{X}_{ij}\|$, we can derive $\mathcal{E}_{m}(\mathbf{X}_i, \mathbf{X}_j)$ as follows:
\begin{align}
\mathcal{E}_{m}(\mathbf{X}_i, \mathbf{X}_j) 
&= n_i \delta(\mathbf{X}_i, \mathbf{X}_{ij}) + n_j \delta(\mathbf{X}_j, \mathbf{X}_{ij}) \nonumber \\
\shortintertext{Since both $\mathbf{X}_i$ and $\mathbf{X}_j$ are normalized (i.e., unit vectors):}
&= n_i (1 - \frac{\mathbf{X}_i \cdot \mathbf{X}_{ij}}{\|\mathbf{X}_{ij}\|}) + n_j (1 - \frac{\mathbf{X}_j \cdot \mathbf{X}_{ij}}{\|\mathbf{X}_{ij}\|}) \nonumber \\
&= n_i + n_j - \frac{n_i\mathbf{X}_i \cdot \mathbf{X}_{ij} + n_j\mathbf{X}_j \cdot \mathbf{X}_{ij}}{\|\mathbf{X}_{ij}\|} \nonumber \\
\shortintertext{By Eq. (\ref{eq:norm}):}
&= n_i + n_j - (n_i + n_j) \|\mathbf{X}_{ij}\| \nonumber \\
\shortintertext{By Eq. (\ref{eq:norm_frac}):}
&= n_i + n_j - \sqrt{n_i^2 + n_j^2 + 2n_in_j\mathbf{X}_i \cdot \mathbf{X}_j} \nonumber \\
\shortintertext{For brevity, let $N = n_i + n_j$ and $M = n_i n_j$. Then:}
 = &~~\sqrt{N^2}  - \sqrt{N^2 - 2 M + 2 M\mathbf{X}_i \cdot \mathbf{X}_j } \nonumber \\
= &~~\sqrt{N^2} - \sqrt{N^2 - 2 M \delta (\mathbf{X}_i , \mathbf{X}_j) } \nonumber \\
% \shortintertext{By multiplying $\frac{\sqrt{N^2} + \sqrt{N^2 - 2M \delta(\mathbf{X}_i, \mathbf{X}_j)}}{\sqrt{N^2} + \sqrt{N^2 - 2M \delta(\mathbf{X}_i, \mathbf{X}_j)}}$, }
= &~~\frac{N^2 - (N^2 - 2 M\delta (\mathbf{X}_i , \mathbf{X}_j))}{\sqrt{N^2} + \sqrt{N^2 - 2 M \delta (\mathbf{X}_i , \mathbf{X}_j)}} \nonumber \\
= &~~\frac{2 M \delta (\mathbf{X}_i , \mathbf{X}_j)}{\sqrt{N^2} + \sqrt{N^2 - 2 M \delta (\mathbf{X}_i , \mathbf{X}_j)}}. \label{eq:em_2}
\end{align}
In order to find the range of Eq. (\ref{eq:em_2}), we first consider its lower bound. Since $2 M \delta (\mathbf{X}_i , \mathbf{X}_j) \geq 0$, we can derive:
\begin{align}
\text{Eq. (\ref{eq:em_2})} \geq \frac{2 M \delta (\mathbf{X}_i , \mathbf{X}_j)}{\sqrt{N^2} + \sqrt{N^2}} = \frac{M}{N} \delta (\mathbf{X}_i , \mathbf{X}_j). \label{eq:lower_bound}
\end{align}
Similarly, the upper bound can be derived as:
\begin{align}
\text{Eq. (\ref{eq:em_2})} \leq \frac{2 M \delta (\mathbf{X}_i , \mathbf{X}_j)}{\sqrt{N^2}} = \frac{2M}{N} \delta (\mathbf{X}_i , \mathbf{X}_j). \label{eq:upper_bound}
\end{align}
By combining (\ref{eq:lower_bound}) and (\ref{eq:upper_bound}), noting that $N = n_i + n_j$ and $M = n_i n_j$, we have the following range of the merging error:
\begin{align}
\frac{n_i n_j}{n_i + n_j} \delta (\mathbf{X}_i , \mathbf{X}_j) \leq \mathcal{E}_{m}(\mathbf{X}_i, \mathbf{X}_j) \leq \frac{2 n_i n_j}{n_i + n_j} \delta (\mathbf{X}_i , \mathbf{X}_j), \label{eq:merging_error_appendix}
\end{align}
which concludes that $\mathcal{E}_{m}(\mathbf{X}_i, \mathbf{X}_j)$ is $\Theta\left(\frac{n_in_j}{n_i + n_j} \delta (\mathbf{X}_i , \mathbf{X}_j)\right)$. \qed

\begin{table*}[t!]
\caption{\textbf{Sensitivity analysis of the batch size.}}
% \vspace{20pt}
\centering
\resizebox{0.77\textwidth}{!}{%
\begin{tabular}{ccccccccccc
>{\columncolor[HTML]{EFEFEF}}c }
\Xhline{2.0\arrayrulewidth}
Batch Size      & 1     & 2     & 4     & 8     & 16    & 32    & 64    & 128   & 256   & 512   & 1024  \\ \hline
Top-1 Acc. (\%) & 79.64 & 79.64 & 79.69 & 79.68 & 79.79 & 79.64 & 79.68 & 79.70 & 79.76 & 79.67 & 79.76 \\
GFLOPs          & 3.27  & 3.25  & 3.40  & 3.32  & 3.24  & 3.24  & 3.23  & 3.23  & 3.23  & 3.23  & 3.23  \\ \Xhline{2.0\arrayrulewidth}
\end{tabular}%
}
\label{tab:A2}
\end{table*}

\section{Sensitivity Analysis of Batch Sizes} \label{app:sens}
Due to our batch-adaptive merging, the performance can vary across different batch sizes. To examine this, we present a sensitivity analysis on batch sizes in Table \ref{tab:A2}. Similar to Table \ref{tab:A1}, the resulting FLOPs can slightly differ while varying the batch size yet equalizing all the other settings and variables. The results do not show any specific patterns with batch sizes, indicating no clear correlation between batch sizes and model performance. Therefore, we simply adopt the largest batch size of 1024 for practical implementation and computational efficiency.

% Table \ref{tab:A2} presents the performance changes across different batch sizes. 

% \subsection{Batch Size in Batch-Adaptive Merging}

% between alternate matching and size-distinctive matching. As illustrated in Algorithm \ref{alg}, $l_s$ denotes the final layer of alternate matching. Our analysis reveals that the optimal configuration is achieved when size-distinctive matching encompasses one-quarter of the total layer depth. With this observation, we configure the splitting layer for all models accordingly.

% , despite using an adaptive strategy that considers each image

\section{Full Results with Varying FLOPs}
In Tables \ref{taba1}--\ref{taba4}, we provide full experimental results on accuracy with varying FLOPs, some of which are corresponding to the points in Figure \ref{fig4}. We also present all the optimal hyperparameter values therein. To find these optimal values, we conduct a hyperparameter sweep for $\alpha$, $\beta$, and $\theta_{\text{min}}$ using Weights \& Biases (WandB) tool. We set the ranges from 0.945 to 1.000 for $\alpha$, from 0.015 to 0.050 for $\beta$, and from 0.800 to 0.945 for $\theta_{\text{min}}$, sweeping all parameters with intervals of 0.005. 

In Table \ref{taba1}, we present the full results on DeiT without additional training. With these results, we fine-tune each compressed models for 30 epochs, demonstrating the performance improvement in Table \ref{taba2} for each set. In this case, the target FLOPs may vary slightly between Table \ref{taba1} and \ref{taba2}, considering that our merging scheme uses an adaptive strategy, which affects the number of merged tokens for each layer.

In Table \ref{taba3}, we present the full results on LV-ViT without additional training, and in Table \ref{taba4}, we present the full results for both T2T-ViT and MAE, also without additional training.

\section{Layer-Specific Visualizations} \label{app:vis}
In Figure \ref{figa1}, we present full visualizations of ATM across layers, comparing with ToMe \cite{BolyaFDZFH23}, a static top-\textit{r} merging method. From these results, we observe that ATM merges significantly more tokens in the initial layers than ToMe for simpler samples, thereby effectively reducing computational costs. Conversely, for complex samples, ATM adopts a more gradual merging approach, carefully preserving important image features.

\twocolumn

\begin{table}[t!]
\caption{\textbf{Full results on DeiT without training.}}
% \vspace{20pt}
\centering
\resizebox{0.9\columnwidth}{!}{%
\begin{tabular}{cccccc}
\Xhline{2.0\arrayrulewidth}
Model                    & $\alpha$ & $\beta$ & $\theta_{\text{min}}$ & GFLOPs & \begin{tabular}[c]{@{}c@{}}Top-1 Acc.\\ (\%)\end{tabular} \\ \hline
\multirow{14}{*}{DeiT-T} & -        & -       & -                     & 1.25   & 72.14                                                     \\  
                         & 0.98    & 0.035   & 0.93                  & 1.05   & 72.19                                                     \\
                         & 0.975    & 0.025   & 0.91                  & 0.98   & 72.15                                                     \\
                         & 0.995        & 0.025   & 0.895                   & 0.96   & 72.14                                                     \\
                         & 0.985    & 0.025    & 0.885                 & 0.93   & 72.08                                                     \\
                         & 1        & 0.035    & 0.88                  & 0.90   & 72.04                                                     \\
                         & 1        & 0.03    & 0.865                 & 0.88   & 72.01                                                     \\
                         & 1    & 0.035    & 0.86                 & 0.85   & 71.93                                                     \\
                         & 0.975     & 0.03   & 0.86                 & 0.83   & 71.83                                                     \\
                         & 0.995     & 0.04   & 0.855                  & 0.81   & 71.74                                                     \\
                         & 0.98        & 0.035   & 0.84                 & 0.78   & 71.58                                                     \\
                         & 0.985    & 0.04    & 0.83                  & 0.75   & 71.33                                                     \\
                         & 0.99     & 0.045   & 0.825                  & 0.72   & 71.08                                                     \\
                         & 0.995        & 0.045   & 0.815                  & 0.71   & 70.94                                                     \\ \hline
\multirow{14}{*}{DeiT-S} & -        & -       & -                     & 4.60   & 79.81                                                     \\
                         & 0.995        & 0.02   & 0.91                  & 3.67   & 79.86                                                     \\
                         & 0.99    & 0.02    & 0.895                  & 3.53   & 79.86                                                     \\
                         & 0.995     & 0.04    & 0.895                 & 3.44   & 79.84                                                     \\
                         & 0.99     & 0.02    & 0.875                  & 3.38   & 79.82                                                     \\
                         & 0.99        & 0.04    & 0.88                  & 3.26   & 79.80                                                     \\
                         & 0.985       & 0.045   & 0.88                 & 3.23  & 79.76
                                        \\
                         & 1    & 0.03   & 0.855                  & 3.13   & 79.74                                                     \\
                         & 1    & 0.04   & 0.86                 & 3.07   & 79.71                                                     \\
                         & 0.99 & 0.04  & 0.86                  & 3.04  & 79.65
                                \\
                         & 1    & 0.05   & 0.855                 & 2.96   & 79.63                                                     \\
                         & 0.995    & 0.05    & 0.845                 & 2.85   & 79.53                                                     \\
                         & 0.975     & 0.04    & 0.835                  & 2.79   & 79.44                                                     \\
                         & 0.97        & 0.045    & 0.83                 & 2.67   & 79.29                                                     \\
                         & 0.98        & 0.05    & 0.825                 & 2.62   & 79.22                                                     \\
                         & 0.96    & 0.035    & 0.81                  & 2.56   & 79.08                                                     \\ \hline
\multirow{17}{*}{DeiT-B} & -        & -       & -                     & 17.57  & 81.79                                                     \\
                         & 0.995        & 0.025   & 0.92                 & 14.60  & 81.84                                                     \\
                         & 0.995        & 0.015    & 0.89                 & 14.26  & 81.79                                                     \\
                         & 1     & 0.015   & 0.8                   & 14.10  & 81.77                                                     \\
                         & 1    & 0.025    & 0.89                  & 13.89  & 81.76                                                     \\
                         & 1     & 0.025   & 0.865                  & 13.24  & 81.68                                                     \\
                         & 0.995     & 0.025   & 0.86                 & 12.99  & 81.61                                                     \\
                         & 0.99     & 0.025   & 0.845                 & 12.41  & 81.51                                                     \\
                         & 0.99     & 0.03    & 0.84                 & 11.95  & 81.38                                                     \\
                         & 0.98    & 0.025   & 0.8                 & 11.21  & 81.05                                                     \\
                         & 0.965     & 0.025   & 0.835                 & 11.05  & 80.94                                                     \\
                         & 0.965     & 0.025    & 0.82                  & 10.78  & 80.79                                                     \\
                         & 0.965    & 0.025    & 0.8                 & 10.49  & 80.60                                                     \\
                         & 0.945    & 0.02    & 0.81                 & 10.29  & 80.45                                                     \\
                         & 0.95    & 0.025   & 0.805                   & 9.98  & 80.17                                                     \\ \Xhline{2.0\arrayrulewidth}
\end{tabular}%
}
\label{taba1}
\end{table}

\begin{table}[t!]
\caption{\textbf{Full results on DeiT with fine-tuning.}}
% \vspace{20pt}
\centering
\resizebox{0.9\columnwidth}{!}{%
\begin{tabular}{cccccc}
\Xhline{2.0\arrayrulewidth}
Model                    & $\alpha$ & $\beta$ & $\theta_{\text{min}}$ & GFLOPs & \begin{tabular}[c]{@{}c@{}}Top-1 Acc.\\ (\%)\end{tabular} \\ \hline
\multirow{14}{*}{DeiT-T} & -        & -       & -                     & 1.25   & 72.14                                                                              \\ 
                         & 0.98    & 0.035   & 0.93                  & 1.04   & 72.44                                                     \\
                         & 0.975    & 0.025   & 0.91                  & 0.98   & 72.41                                                     \\
                         & 0.995        & 0.025   & 0.895                   & 0.96   & 72.42                                                     \\
                         & 0.985    & 0.025    & 0.885                 & 0.92   & 72.32                                                     \\
                         & 1        & 0.035    & 0.88                  & 0.90   & 72.33                                                     \\
                         & 1        & 0.03    & 0.865                 & 0.88   & 72.33                                                     \\
                         & 1    & 0.035    & 0.86                 & 0.85   & 72.39                                                     \\
                         & 0.975     & 0.03   & 0.86                 & 0.83   & 72.25                                                     \\
                         & 0.995     & 0.04   & 0.855                  & 0.81   & 72.20                                                     \\
                         & 0.98        & 0.035   & 0.84                 & 0.77   & 72.08                                                     \\
                         & 0.985    & 0.04    & 0.83                  & 0.74   & 71.89                                                     \\
                         & 0.99     & 0.045   & 0.825                  & 0.72   & 71.79                                                     \\
                         & 0.995        & 0.045   & 0.815                  & 0.71   & 71.76                                                     \\ \hline
\multirow{15}{*}{DeiT-S} & -        & -       & -                     & 4.60   & 79.81                                                     \\
                         & 0.995        & 0.02   & 0.91                  & 3.64   & 79.92                                                     \\
                         & 0.99    & 0.02    & 0.895                  & 3.50   & 79.94                                                     \\
                         & 0.995     & 0.04    & 0.895                 & 3.39   & 79.92                                                     \\
                         & 0.99     & 0.02    & 0.875                  & 3.35   & 79.90                                                     \\
                         & 0.99        & 0.04    & 0.88                  & 3.23   & 79.97                                                     \\
                         & 0.985       & 0.045   & 0.88                 & 3.19  & 80.00
                                        \\
                         & 1    & 0.03   & 0.855                  & 3.11   & 79.93                                                     \\
                         & 1    & 0.04   & 0.86                 & 3.04   & 79.89                                                     \\
                         & 0.99 & 0.04  & 0.86                  & 3.01  & 79.92
                                \\
                         & 1    & 0.05   & 0.855                 & 2.93   & 79.86                                                     \\
                         & 0.995    & 0.05    & 0.845                 & 2.81   & 79.80                                                     \\
                         & 0.975     & 0.04    & 0.835                  & 2.75   & 79.77                                                     \\
                         & 0.97        & 0.045    & 0.83                 & 2.63   & 79.77                                                     \\
                         & 0.98    & 0.05    & 0.825                  & 2.59   & 79.64                                                     \\
                         & 0.96        & 0.035    & 0.81                 & 2.52   & 79.58                                                     \\ \hline
\multirow{17}{*}{DeiT-B} & -        & -       & -                     & 17.57  & 81.79                                                     \\
                         & 0.995        & 0.025   & 0.92                 & 14.48  & 81.89                                                     \\
                         & 0.995        & 0.015    & 0.89                 & 14.15  & 81.87                                                     \\
                         & 1     & 0.015   & 0.8                   & 14.00  & 81.84                                                     \\
                         & 1    & 0.025    & 0.89                  & 13.71  & 81.84                                                     \\
                         & 1     & 0.025   & 0.865                  & 13.04  & 81.80                                                     \\
                         & 0.995     & 0.025   & 0.86                 & 12.78  & 81.76                                                     \\
                         & 0.99     & 0.025   & 0.845                 & 12.23  & 81.70                                                     \\
                         & 0.99     & 0.03    & 0.84                 & 11.76  & 81.66                                                     \\
                         & 0.98    & 0.025   & 0.8                 & 11.05  & 81.50                                                     \\
                         & 0.965     & 0.025   & 0.835                 & 10.85  & 81.52                                                     \\
                         & 0.965     & 0.025    & 0.82                  & 10.59  & 81.45                                                     \\
                         & 0.965    & 0.025    & 0.8                 & 10.31  & 81.26                                                     \\
                         & 0.945    & 0.02    & 0.81                 & 10.12  & 81.24                                                     \\
                         & 0.95    & 0.025   & 0.805                   & 9.80  & 81.19                                                     \\ \Xhline{2.0\arrayrulewidth}
\end{tabular}%
}
\label{taba2}
\end{table}

\begin{table}[t!]
\caption{\textbf{Full results on LV-ViT without training.}}
% \vspace{20pt}
\centering
\resizebox{0.9\columnwidth}{!}{%
\begin{tabular}{cccccc}
\Xhline{2.0\arrayrulewidth}
Model                      & $\alpha$ & $\beta$ & $\theta_{\text{min}}$ & GFLOPs & \begin{tabular}[c]{@{}c@{}}Top-1 Acc.\\ (\%)\end{tabular} \\ \hline
\multirow{11}{*}{LV-ViT-T} & -        & -       & -                     & 2.87   & 79.11                                                     \\
                           & 0.96    & 0.02    & 0.935                  & 2.42   & 79.15                                                     \\
                           & 0.985     & 0.015   & 0.92                 & 2.39   & 79.09                                                     \\
                           & 0.96        & 0.045   & 0.915                  & 2.30   & 79.08                                                     \\
                           & 0.98    & 0.035    & 0.905                 & 2.28   & 79.02                                                     \\
                           & 0.955     & 0.05   & 0.905                 & 2.24   & 79.00                                                     \\
                           & 0.97    & 0.025   & 0.885                  & 2.21   & 78.96                                                     \\
                           & 0.96    & 0.045    & 0.885                 & 2.16   & 78.92                                                     \\
                           & 0.995    & 0.045    & 0.865                 & 2.12   & 78.82                                                     \\
                           & 0.955    & 0.035    & 0.865                 & 2.09   & 78.77                                                     \\
                           & 0.95    & 0.045   & 0.865                  & 2.06   & 78.69                                                     \\ 
                           & 0.96   & 0.045 & 0.855                     & 2.05  & 78.68
                                            \\
                            & 0.95  & 0.05  & 0.855                     & 2.02  & 78.55
                                            \\
                            & 0.945 & 0.025 & 0.83                      & 2.00  & 78.47
                                            \\ \hline
\multirow{13}{*}{LV-ViT-S} & -        & -       & -                     & 6.57   & 83.26                                                     \\
                           & 0.995    & 0.05    & 0.945                 & 5.27   & 83.27                                                     \\
                           & 0.955    & 0.015   & 0.935                  & 4.98   & 83.22                                                     \\
                           & 0.95     & 0.015   & 0.925                 & 4.73   & 83.16                                                     \\
                           & 0.955     & 0.015   & 0.915                 & 4.56   & 83.09                                                     \\
                           & 0.955    & 0.035   & 0.905                  & 4.34   & 83.00                                                     \\
                           & 0.995    & 0.04    & 0.9                  & 4.31   & 82.94                                                     \\
                           & 0.995     & 0.035    & 0.89                 & 4.16   & 82.84                                                     \\
                           & 0.945    & 0.05   & 0.89                & 4.01   & 82.79                                                     \\
                           & 0.945    & 0.045    & 0.885                 & 3.93   & 82.74                                                     \\
                           & 0.95     & 0.045   & 0.875                 & 3.80   & 82.60                                                     \\
                           & 0.945    & 0.045   & 0.865                 & 3.64   & 82.37                                                     \\
                           & 0.945    & 0.04    & 0.86                 & 3.59   & 82.22
                                            \\ 
                            & 0.945     & 0.05  & 0.855                 & 3.51  & 82.08
                                            \\ \hline
\multirow{11}{*}{LV-ViT-M} & -        & -       & -                     & 12.74  & 84.00                                                     \\
                           & 0.985        & 0.035    & 0.94                  & 9.00   & 83.95                                                     \\
                           & 0.99    & 0.03   & 0.925                 & 8.27   & 83.89                                                     \\
                           & 0.98     & 0.03    & 0.92                 & 8.04   & 83.84                                                     \\
                           & 0.955    & 0.045    & 0.92                 & 7.95   & 83.83                                                     \\
                           & 0.975    & 0.045    & 0.915                  & 7.82   & 83.80                                                     \\
                           & 0.955        & 0.015    & 0.915                 & 7.77   & 83.79                                                     \\
                           & 0.965     & 0.045   & 0.91                 & 7.60   & 83.77                                                     \\
                           & 0.955    & 0.04    & 0.905                  & 7.35   & 83.73                                                     \\
                           & 0.995     & 0.05   & 0.895                 & 7.06   & 83.64                                                     \\
                           & 0.965    & 0.025    & 0.89                 & 6.96   & 83.60                                                     \\ 
                           & 0.95       & 0.025 & 0.885             & 6.68  & 83.56 
                                            \\
                            & 0.96  & 0.045 & 0.88                  & 6.55  & 83.49
                                            \\
                            & 0.965 & 0.04  & 0.875                 & 6.46  & 83.44
                                            \\
                            & 0.95  & 0.035 & 0.875                 & 6.35  & 83.41
                                            \\
                            & 0.945 & 0.05  & 0.87                  & 6.16  & 83.29
                                            \\
                           \Xhline{2.0\arrayrulewidth}
\end{tabular}%
}
\label{taba3}
\end{table}

% Please add the following required packages to your document preamble:
% \usepackage{multirow}
% \usepackage{graphicx}
\begin{table}[t!]
\caption{\textbf{Full results on T2T-ViT and MAE without training.}}
% \vspace{20pt}
\centering
\resizebox{0.9\columnwidth}{!}{%
\begin{tabular}{cccccc}
\Xhline{2.0\arrayrulewidth}
Model                                   & $\alpha$ & $\beta$ & $\theta_{\text{min}}$ & GFLOPs & \begin{tabular}[c]{@{}c@{}}Top-1 Acc.\\ (\%)\end{tabular} \\ \hline
\multirow{15}{*}{T2T-ViT$_\text{t}$-14} & -        & -       & -                     & 6.09    & 81.70                                                      \\ 
                                        & 0.995    & 0.015   & 0.945                 & 5.35   & 81.72                                                     \\
                                        & 0.97     & 0.02    & 0.92                  & 4.97   & 81.70                                                     \\
                                        & 0.995    & 0.02    & 0.9                   & 4.76   & 81.66                                                     \\
                                        & 0.96    & 0.035   & 0.89                  & 4.54   & 81.60                                                     \\
                                        & 1    & 0.045    & 0.88                 & 4.45   & 81.56                                                     \\
                                        & 0.955        & 0.03    & 0.875                 & 4.37   & 81.51                                                     \\
                                        & 0.97     & 0.045   & 0.865                  & 4.25   & 81.45                                                     \\
                                        & 0.96     & 0.04   & 0.855                 & 4.15   & 81.37                                                     \\
                                        & 0.975        & 0.04    & 0.85                  & 4.12   & 81.36                                                     \\
                                        & 0.975        & 0.05    & 0.85                  & 4.09   & 81.31                                                     \\
                                        & 0.955    & 0.035   & 0.845                 & 4.05   & 81.28                                                     \\
                                        & 0.975    & 0.045    & 0.84                  & 4.01   & 81.23                                                     \\
                                        & 0.95    & 0.05    & 0.845                  & 3.99   & 81.21                                                     \\
                                        & 0.965    & 0.045   & 0.835                 & 3.96   & 81.16                                                     \\ \hline
\multirow{13}{*}{T2T-ViT$_\text{t}$-19} & -        & -       & -                     & 9.79    & 82.40                                                      \\ 
                                        & 0.985    & 0.045   & 0.94                  & 7.79   & 82.40                                                     \\
                                        & 0.985    & 0.015   & 0.93                  & 7.52   & 82.39                                                     \\
                                        & 0.96    & 0.025    & 0.915                 & 6.97   & 82.34                                                     \\
                                        & 1     & 0.035    & 0.905                  & 6.73   & 82.29                                                     \\
                                        & 0.975    & 0.025    & 0.9                 & 6.59   & 82.27                                                     \\
                                        & 0.955     & 0.04    & 0.895                  & 6.37   & 82.22                                                     \\
                                        & 0.995    & 0.035    & 0.885                 & 6.21   & 82.20                                                     \\
                                        & 0.945    & 0.05    & 0.88                  & 5.95   & 82.07                                                     \\
                                        & 0.96     & 0.04    & 0.87                  & 5.78   & 81.99                                                     \\
                                        & 0.97    & 0.035    & 0.86                  & 5.63   & 81.91                                                     \\
                                        & 0.95    & 0.05   & 0.855                  & 5.42   & 81.83                                                     \\
                                        & 0.945     & 0.045    & 0.85                 & 5.32   & 81.75                                                     \\
                                        & 0.96     & 0.05    & 0.84                 & 5.18   & 81.61                                                     \\
                                        & 0.95     & 0.05    & 0.83                 & 5.00   & 81.47                                                     \\
                                        & 0.955     & 0.045    & 0.82                 & 4.91   & 81.32                                                     \\
                                        & 0.945     & 0.05    & 0.82                 & 4.85   & 81.23                                                     \\
                                        & 0.96     & 0.05    & 0.815                 & 4.83   & 81.17                                                     \\
                                        \hline
\multirow{17}{*}{MAE}                   & -        & -       & -                     & 17.57   & 83.72                                                      \\ 
                                        & 1    & 0.025   & 0.94                  & 16.58  & 83.67                                                     \\
                                        & 0.995        & 0.025   & 0.905                  & 15.69  & 83.56                                                     \\
                                        & 0.99    & 0.035   & 0.92                  & 15.48  & 83.52                                                     \\
                                        & 0.985        & 0.02   & 0.905                 & 15.24  & 83.49                                                     \\
                                        & 0.985        & 0.02    & 0.89                 & 14.87  & 83.43                                                     \\
                                        & 0.995     & 0.03   & 0.87                  & 14.49  & 83.37                                                     \\
                                        & 0.985        & 0.02   & 0.86                 & 14.13  & 83.31                                                     \\
                                        & 0.995    & 0.045    & 0.865                 & 13.74  & 83.24                                                     \\
                                        & 0.97    & 0.015    & 0.845                 & 13.43  & 83.19                                                     \\
                                        & 0.975     & 0.02   & 0.845                  & 13.24  & 83.13                                                     \\
                                        & 0.98    & 0.02    & 0.81                 & 13.05  & 83.07                                                     \\
                                        & 0.975    & 0.03    & 0.845                 & 12.73  & 83.00                                                     \\
                                        & 0.965    & 0.025    & 0.845                 & 12.53  & 82.93                                                     \\
                                        & 0.975    & 0.03   & 0.82                 & 12.05  & 82.83                                                     \\
                                        & 0.975    & 0.03    & 0.815                  & 11.92  & 82.74                                                     \\
                                        & 0.975     & 0.03    & 0.805                 & 11.70  & 82.64                                                     \\ 
                                        & 0.975     & 0.03    & 0.8                 & 11.61  & 82.55                                                     \\ 
                                        & 0.96     & 0.03    & 0.82                 & 11.50  & 82.46                                                     \\ 
                                        \Xhline{2.0\arrayrulewidth}
\end{tabular}%
}
\label{taba4}
\end{table}

\onecolumn

\begin{figure*}[t!]
    \centering
    \includegraphics[width=0.95\textwidth]{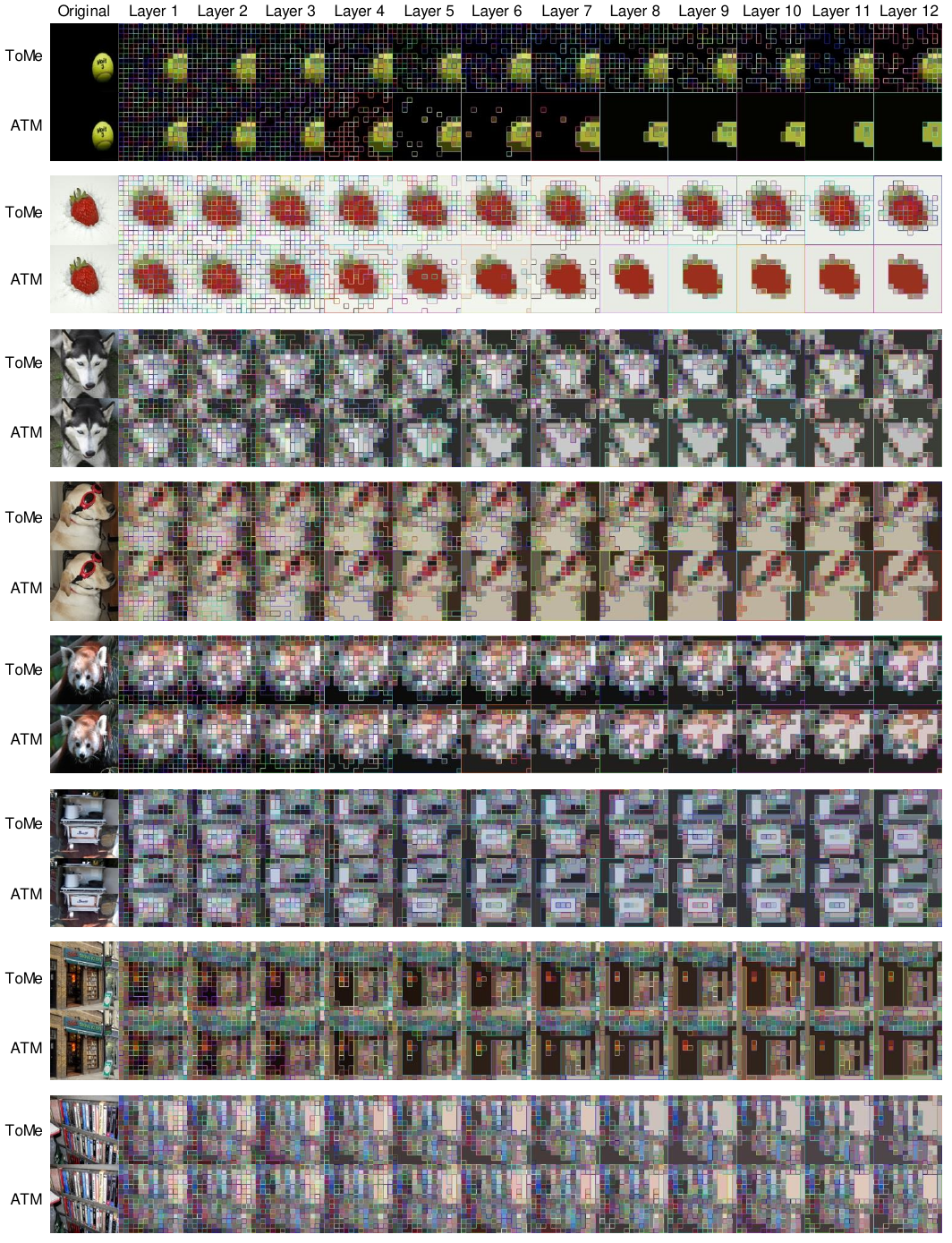} % Reduce the figure size so that it is slightly narrower than the column. Don't use precise values for figure width.This setup will avoid overfull boxes.
    \caption{\textbf{Layer-specific visualizations.}}
    % \vspace{20pt}
    \label{figa1}
\end{figure*}

\end{document}